\theoremstyle{plain}
\newtheorem{theorem}{Theorem}[section]
\newtheorem{proposition}[theorem]{Proposition}
\theoremstyle{definition}
\theoremstyle{remark}
\icmltitlerunning{Score-Based Diffusion Policy Compatible with Reinforcement Learning
via Optimal Transport}
\begin{document}

\twocolumn[
\icmltitle{Score-Based Diffusion Policy \\Compatible with Reinforcement Learning
via Optimal Transport}




\begin{icmlauthorlist}
\icmlauthor{Mingyang Sun}{Zju,West}
\icmlauthor{Pengxiang Ding}{Zju,West}
\icmlauthor{Weinan Zhang}{SJTU}
\icmlauthor{Donglin Wang}{West}
\end{icmlauthorlist}

\icmlaffiliation{Zju}{Zhejiang University, Hangzhou, China}
\icmlaffiliation{West}{Westlake University, Hangzhou, China}
\icmlaffiliation{SJTU}{Shanghai Jiao Tong University, Shanghai, China}

\icmlcorrespondingauthor{}{sunmingyang, wangdonglin@westlake.edu.cn}

\icmlkeywords{Machine Learning, ICML}

\vskip 0.3in
]



\printAffiliationsAndNotice{}  

\begin{abstract}
Diffusion policies have shown promise in learning complex behaviors from demonstrations, particularly for tasks requiring precise control and long-term planning. However, they face challenges in robustness when encountering distribution shifts. This paper explores improving diffusion-based imitation learning models through online interactions with the environment. We propose OTPR (Optimal Transport-guided score-based diffusion Policy for Reinforcement learning fine-tuning), a novel method that integrates diffusion policies with RL using optimal transport theory. OTPR leverages the Q-function as a transport cost and views the policy as an optimal transport map, enabling efficient and stable fine-tuning. Moreover, we introduce masked optimal transport to guide state-action matching using expert keypoints and a compatibility-based resampling strategy to enhance training stability. Experiments on three simulation tasks demonstrate OTPR's superior performance and robustness compared to existing methods, especially in complex and sparse-reward environments. In sum, OTPR provides an effective framework for combining IL and RL, achieving versatile and reliable policy learning. The code will be released at \url{https://github.com/Sunmmyy/OTPR.git}.

\end{abstract}

\section{Introduction}
Robotic manipulation is an intricate endeavor, where the delicate interplay of long-term planning and instantaneous control poses a captivating challenge - the quest to develop policies that can seamlessly navigate this balance lies at the forefront of modern robotics~\cite{DBLP:journals/corr/abs-2305-12821,DBLP:conf/nips/MuLXYLLTHJ021,chen2023diffusion}. Tasks demand not only the ability to execute complex sequences but also the adaptability to handle uncertainties and disturbances. Imitation Learning (IL) has emerged as a popular data-driven approach for training robots by imitating demonstration data, with advancements of Behavior Cloning (BC) like diffusion models \cite{dp, ajayconditional} and action chunking \cite{DBLP:conf/rss/ZhaoKLF23} enhancing its ability to learn complex, long-horizon behaviors. 
Notably, Diffusion Policy (DP)~\cite{dp} has shown promise due to the capacity to handle multi-modal action distributions, excel in high-dimensional spaces, and achieve stable training through techniques like denoising and score matching.
However, these advancements still fail to address the fundamental flaws of BC, which remains highly susceptible to distributional shifts, where the policy encounters states outside its training data, leading to compounding errors~\cite{ross2010efficient}. 

Reinforcement Learning (RL) offers a powerful framework for autonomous learning through trial-and-error interactions guided by reward signals, making it particularly effective in training reactive controllers that adapt to noise, disturbances, and unforeseen states~\cite{kober2013reinforcement}. 
RL learns corrective behaviors directly from experience, enabling policies to recover from errors and handle states beyond the training distribution. Its ability to optimize over long time horizons can also refine action sequences, enhancing robustness and precision.
Unlike IL, which benefits from leveraging demonstration data to jump start learning, RL enhances generalization by exploring diverse scenarios and adapting dynamically to environmental changes.
However, RL also faces significant challenges, including the need for carefully designed reward functions and  vast interaction data, which is costly to collect, particularly in real-world settings~\cite{DBLP:conf/icml/ParkM024}.

These strengths and weaknesses suggest that an integrated approach, combining RL’s adaptability with DP’s demonstration-driven learning, holds promise for achieving reliable, scalable, and versatile robotic manipulation. 
The most common approach is to pretrain a imitation policy with human data and then finetune it with RL~\cite{DBLP:conf/iclr/BlackJDKL24}. Some methods apply additional regularization~\cite{DBLP:conf/rss/RajeswaranKGVST18} or seperated policy network~\cite{ankile2024imitation} to ensure that the knowledge from demonstrations does not get washed out quickly by the randomly initialized critics, which may suffer from hyper-parameter tuning issue. 
Additionally, the structure of diffusion models (iterative refinement) inherently complicate the application of standard RL algorithms, often leading to low-efficiency, instability or requiring significant architectural modifications~\cite{dppo,mark2024policy}. 


In this paper, we integrate insights from the optimal transport theory~\cite{gu2023optimal,montesuma2024recent} to refine the diffusion policy optimization process, leveraging knowledge gained from expert trajectories to improve learning efficiency and policy performance in subsequent reinforcement learning tasks. 
By utilizing the Q-function as a transport cost and viewing the policy as an optimal transport map, we establish an equivalent relationship between the optimal transport map and the optimal policy, which opens avenues for applying recent advantages of RL to diffusion policy directly. 
Our key contributions are as follows: 
\begin{itemize}
    \item We proposed an \textbf{O}ptimal \textbf{T}ransport guided score-based diffusion \textbf{P}olicy for \textbf{R}einforcement Learning fine-tuning (\textbf{OTPR}), which is the first work to systematically combine optimal transport theory with diffusion policies for reinforcement learning fine-tuning. OTPR's core lies in solving the $L_2$-regularized OT dual problem, thereby deriving a compatibility function that establishes a soft coupling relationship between states and actions, effectively integrating imitation learning and reinforcement learning.
    
    \item To enhance efficiency and accuracy, we introduce the \textbf{Masked Optimal Transport} to leverage the paired state-action from expert data as keypoints to guide the matching of the other state-action points from replay buffer, which seamlessly integrates imitation learning and reinforcement learning objective. 
    \item To address the sub-optimal performance of the conditional score-based model when trained with standard algorithms on mini-batch data, we introduce a \textbf{Compatibility-based Resampling} strategy to selects action with high compatibility scores to guide the training process, thereby enhancing  performance.
\end{itemize}
We conduct extensive experiments on 3 simulation tasks spanning various difficulty levels. The results demonstrate that OTPR consistently matches or outperforms existing state-of-the-art methods in all tasks, with particularly notable improvements in more challenging scenarios.

\begin{figure*}[ht] 
\vskip 0.1in
    \centering
    \includegraphics[width=\textwidth]{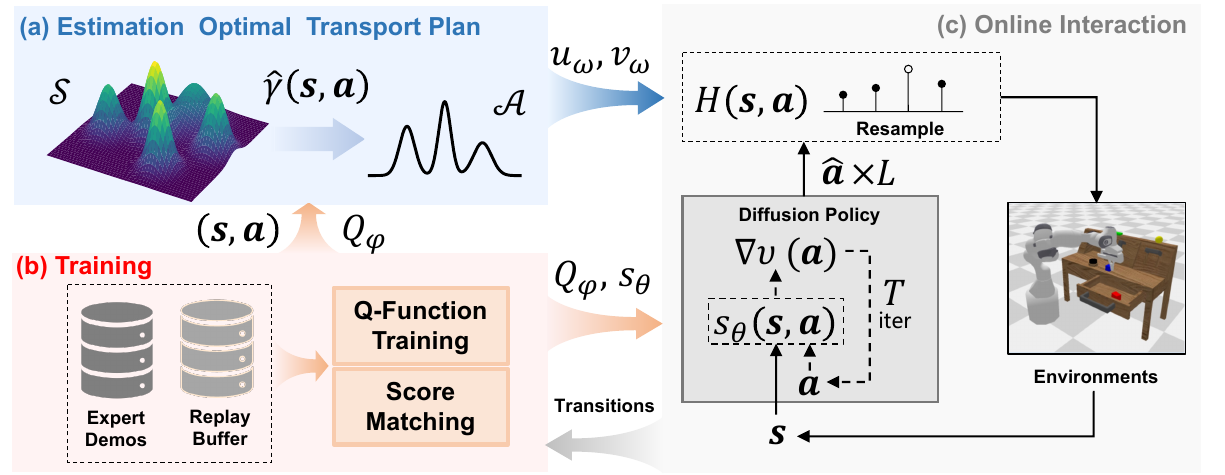} 
    \caption{\textbf{Overview of OTPR.} (a) Estimation Optimal Transport Plan: A stochastic dual approach with two parametrized dual variables is introduced to estimate the optimal transport plan with $Q$-cost from state distribution and action distribution.  (b) Training: OTPR pre-trains a diffusion model from the expert’s data. It then iteratively performs RL to optimize a $Q$-function and trains diffusion models by score matching. (c) Online Interaction: In the inference step, the policy makes action inference by iteratively denoising a random noise, conditioned on current state. OTPR then employs the compatibility function $H$ to reweight each action before resampling one.} 
    \label{fig:overview}
    \vskip -0.2in
\end{figure*}

\section{Related Work}


\textbf{Diffusion based policies.} 
Diffusion-based policies have shown recent success in robotics and decision-making applications. In a pioneering work, “Diffuser”~\cite{janner2022planning}, a planning algorithm
with diffusion models for offline reinforcement learning.
This framework is extended to other tasks in the context of
offline reinforcement learning~\cite{wang2022diffusion}, where the
training dataset includes reward values.
Most typically, diffusion based policies are trained from human demonstrations through a supervised objective, and enjoy both high training stability and strong performance in modeling complex and multi-modal trajectory distributions. 
The application of DDPM~\cite{ddpm} and DDIM~\cite{ddim} on visuomotor policy learning for physical robots~\cite{dp} outperforms counterparts like Behavioral Cloning.
While these techniques effectively learn
from multi-modal data, they often create models that are non-trivial to fine-tune using RL. Even if they were compatible with RL, the fine-tuning process can be computationally prohibitive due to the large number of parameters in modern policy models. 

\textbf{Training diffusion models with reinforcement learning.} As demonstration data are often limited, there have been many approaches proposed to improve the performance of diffusion-based policies. 
One straightforward approach \cite{DBLP:conf/iclr/BlackJDKL24, fan2024reinforcement} involves framing diffusion denoising as a Markov Decision Process (MDP), which facilitates preference-aligned generation with policy gradient reinforcement learning. However, this approach often suffers from instability, limiting its practical applicability. \cite{dppo} introduced policy gradient loss on a two-layer MDP for direct diffusion policy fine-tuning, which mitigates this instability, but the method is architecture-specific and does not introduce closed-loop control. 
Alternative approaches to integrating diffusion architectures with reinforcement learning (RL) include leveraging Q-function-based importance sampling~\cite{idql}, employing advantage-weighted regression~\cite{goo2022know}, or reformulating the objective as a supervised learning problem with return conditioning~\cite{chen2021decision,janner2022planning,ajayconditional}.
Additionally, researchers have explored enhancing the denoising training objective by incorporating Q-function maximization~\cite{wang2022diffusion} and iteratively refining the dataset using Q-functions~\cite{yang2023policy}. 
Another promising direction involves augmenting a frozen, chunked diffusion policy model with a residual policy trained through online RL, enabling improved performance without modifying the pre-trained diffusion model~\cite{ankile2024imitation}.


\section{Background}

In this section, we offer fundamental definitions and theories to lay the groundwork for our framework, which will be thoroughly analyzed afterwards.  

\subsection{Optimal Transport}
Given two probability spaces \( (\mathcal{X}, \mu) \), \( (\mathcal{Y}, \nu) \) and a cost function \( c : \mathcal{X} \times \mathcal{Y} \rightarrow \mathbb{R}\), the Monge problem~\cite{villani2009optimal} is solving optimal map \( \mathcal{T} : \mathcal{X} \rightarrow \mathcal{Y} \) such that  
\begin{equation}  \label{eq:mon}
    \inf \left\{ M(\mathcal{T}) := \mathbb{E}_{\boldsymbol{x}\sim\mu} \left[c(\boldsymbol{x}, \mathcal{T}(\boldsymbol{x}))\right] \; \Big| \; \mathcal{T}_\#\mu = \nu \right\}
\end{equation}  
where the random variables $\boldsymbol{x} \sim \mu$ and \( \mathcal{T}_\#\mu \) is push forward of \( \mu \) subject to \( (\mathcal{T}_\#\mu)(\mathcal{Y}') := \mu(\mathcal{T}^{-1}(\mathcal{Y}')) \) for any measurable set \( \mathcal{Y}' \subset \mathcal{Y} \). Instead of finding the map \( \mathcal{T} \) in the original Monge problem, the relaxed Kantorovich optimal scheme \( K(\gamma) \) is obtained by \( \gamma \) realizing  
\begin{equation} \label{eq:kav} 
    \inf \left\{ K(\gamma) := \mathbb{E}_{\boldsymbol{x} \times \boldsymbol{y}\sim\gamma} [c(\boldsymbol{x}, \boldsymbol{y})] \; \Big| \; \gamma \in \Gamma(\mu, \nu) \right\},
\end{equation}  
where \( \Gamma(\mu, \nu) \) is the space composed of all joint probability measures \( \gamma \) on \( \mathcal{X} \times \mathcal{Y} \) with marginals \( \mu \) and \( \nu \). 




\textbf{Regularized OT} Regularization was introduced in \cite{cuturi2013sinkhorn} to speed up the computation of OT problem, which is achieved by incorporating a negative-entropy penalty \( R \) to the primal variable \( \gamma \) of Problem~\ref{eq:kav},  
\begin{equation} \label{eq:kavR} 
    \inf \left\{ K_\lambda(\gamma) := \mathbb{E}_{\boldsymbol{x} \times \boldsymbol{y}\sim\gamma} [c(\boldsymbol{x}, \boldsymbol{y})] + \lambda R(\gamma) \; \Big| \; \gamma \in \Gamma(\mu, \nu) \right\},
\end{equation} 
As highlighted by~\cite{daniels2021score}, adding a regularization term with $\alpha$-strong convexity (such as entropy or squared \( L_2 \) norm) to the problem~\ref{eq:kavR} is a sufficient condition for $\lambda\alpha$-strong convexity of $K_\lambda(\gamma)$ in $L_1$-norm, which makes the dual problem an unconstrained maximization problem. 

In this work, we consider the \( L_2 \) regularization introduced by \cite{dessein2018regularized}, whose computation is found to be more stable since there is no exponential term causing overflow. For all $x \in \mathcal{X} \ \text{and} \ y \in \mathcal{Y}$, 
\begin{equation}
R_{L^2}(\gamma) \overset{\text{def.}}{=} \int_{\mathcal{X} \times \mathcal{Y}} \left( \frac{\mathrm{d}\gamma(x,y)}{\mathrm{d}\mu(x) \mathrm{d}\nu(y)} \right)^{2} \mathrm{d}\mu(x) \mathrm{d}\nu(y).
\end{equation}
where $\frac{\mathrm{d}\gamma(x,y)}{\mathrm{d}\mu(x) \mathrm{d}\nu(y)}$ is the density, i.e., the Radon-Nikodym derivative of $ \gamma \text{ w.r.t. } \mu \times \nu$.


\textbf{Regularized OT Dual} We refer to the objective $K_\lambda(\gamma)$ as the primal objective, and we will use $\mathcal{J}_{\lambda}(u,v)$ to refer to the associated dual objective, with dual variables $u,v$. The dual of the regularized OT problems
can be obtained through the Fenchel-Rockafellar’s duality theorem,
\begin{align}  \label{eq:kavRdual}
    & \sup_{u,v} \mathbb{E}_{(\boldsymbol{x},\boldsymbol{y}) \sim \mu \times \nu} \left[ u(\boldsymbol{x}) + v(\boldsymbol{y}) + F_{\lambda}(u(\boldsymbol{x}), v(\boldsymbol{y})) \right], \\
    & \text{where } F_{\lambda}(u(\boldsymbol{x}), v(\boldsymbol{y})) =  -\frac{1}{4\lambda}(u(\boldsymbol{x}) + v(\boldsymbol{y}) - c(\boldsymbol{x},\boldsymbol{y}))^{2}_{+} \notag
\end{align}
is concave w.r.t. $(u,v)$ and $a_{+} = \max(a, 0)$. In order to recover the solution \(\gamma_{\lambda}\) of the regularized primal problem~\ref{eq:kavR}, we can use the first-order optimality conditions of the Fenchel-Rockafellar's duality theorem,  
\begin{align}\label{eq:kavRplan}
& \mathrm{d}\gamma_{\lambda}(\boldsymbol{x}, \boldsymbol{y}) = H_{\lambda}(\boldsymbol{x}, \boldsymbol{y})\mathrm{d}\mu(\boldsymbol{x})\mathrm{d}\nu(\boldsymbol{y}) \\
&\text{where} \ H_{\lambda}(\boldsymbol{x}, \boldsymbol{y}) = \frac{1}{2\lambda}(u(\boldsymbol{x}) + v(\boldsymbol{y}) - c(\boldsymbol{x},\boldsymbol{y}))_{+}. \notag
\end{align}
$H$ is called compatibility function.

\subsection{Reinfrocement Learning and Imitation Learning}\label{sec:ILRL}
\textbf{Reinfrocement Learning} We consider a standard Markov decision process (MDP) consisting of state space \( \boldsymbol{s} \in \mathcal{S} \), continuous action space \( \boldsymbol{a} \in \mathcal{A}\), deterministic state transition function \( \mathcal{P} : \mathcal{S} \times \mathcal{A} \to \mathcal{S} \), reward function \( r : \mathcal{S} \to \mathbb{R} \) and discount factor \( \kappa \). 
$\tau \sim \pi$ denotes the distribution of trajectory $ (s_0, a_0, s_1, a_1, \ldots)$ given the policy $\pi(\boldsymbol{a}|\boldsymbol{s})$.  
The action-state value function is $Q_{\pi}(s, a) = \mathbb{E}_{\tau \sim \pi} \left[ \sum_{t=0}^{\infty} \kappa^t r_t | a_0 = a, s_0 = s \right] $. 
The goal of RL is to learn the policy $\pi$ that maximizes the discounted expected cumulative reward over a trajectory $\tau$, defined as $\mathcal{J}_{\text{RL}}(\pi) = \mathbb{E}_{\tau\sim\pi}[\sum_{k=0}\kappa^kr_k]$.

\textbf{Imitation Learning} We assume access to a dataset \( \mathcal{D} \) of demonstrations collected by expert human operators (often assumed to be optimal). Each trajectory \( \tau \in \mathcal{D} \) consists of a sequence of transitions $\{(s_0, a_0), \ldots, (s_K, a_K)\}$.  
The most common IL method is behavior cloning (BC) which trains a parameterized policy \(\pi_{\theta}\) to minimize the negative log-likelihood of data, i.e.,  
$ L(\theta) = -\mathbb{E}_{(\boldsymbol{s},\boldsymbol{a}) \sim \mathcal{D}}[\log \pi_{\theta}(\boldsymbol{a}|\boldsymbol{s})]. $ 
In this work, we assume \(\pi_{\theta}\) follows an isotropic Gaussian as its action distribution for simplicity. 
With the isotropic assumption, the BC training objective can be formulated as the following squared loss: $L^{IL}(\theta) = \mathbb{E}_{(\boldsymbol{s},\boldsymbol{a}) \sim \mathcal{D}} \left\| \pi_{\theta}(\boldsymbol{s}) - \boldsymbol{a} \right\|_2^2$. 


\subsection{Conditional Score Based Diffusion Policy}~\label{sec:CSDP}
The conditional Score Based Diffusion Models (SBDMs) \cite{song2021maximum,batzolis2021conditional} aim to generate a target sample $\boldsymbol{y}$ from the distribution $\mu$ of target training data given a condition data $\boldsymbol{x}$. 
In imitation learning, diffusion policy regard state $\boldsymbol{s}$ as condition $\boldsymbol{x}$ and use a forward stochastic differential equation (SDE) to add Gaussian noises to the target training data $\boldsymbol{a}$ for training the conditional score-based model. The forward SDE is \( \mathrm{d}\boldsymbol{a}_t = f(\boldsymbol{a}_t, t) \mathrm{d}t + g(t) \mathrm{d}\mathbf{w} \) with \( \boldsymbol{a}_0 \sim \nu \), where \( \mathbf{w} \in \mathbb{R}^D \) is a standard Wiener process, \( f(\cdot, t) : \mathbb{R}^D \to \mathbb{R}^D \) is the drift coefficient, and \(g(t) \in \mathbb{R}\) is the diffusion coefficient. 
Let \(\nu_{t|0}\) be the conditional distribution of \(\boldsymbol{a}_t\) given the initial state \(\boldsymbol{a}_0\), and \(\nu_t\) be the marginal distribution of \(\boldsymbol{a}_t\). 
The conditional score-based model is trained by denoising score-matching loss:  
\begin{align}\label{eq:DSM}
\mathcal{J}_{\text{DSM}}&(\theta) = \mathbb{E}_t w_t \mathbb{E}_{\boldsymbol{a}_0 \sim \nu} \mathbb{E}_{\boldsymbol{a}_t \sim \nu_{t | 0} (\boldsymbol{a}_t | \boldsymbol{a}_0)}  \\ 
&\left\| s_{\theta}(\boldsymbol{a}_t; \boldsymbol{s}_{\text{cond}}(\boldsymbol{a}_0), t) - \nabla_{\boldsymbol{a}_t} \log \nu_{t|0}(\boldsymbol{a}_t | \boldsymbol{a}_0) \right\|^2_2, \notag
\end{align}

where \(w_t\) is the weight for time \(t\). In this paper, \(t\) is uniformly sampled from \([0, T]\), i.e., \(t \sim U([0, T])\). With the trained \(s_{\hat{\theta}}(\boldsymbol{a}; \boldsymbol{s}, t)\), given a condition data \(\boldsymbol{s}\), the target sample \(\boldsymbol{a}_0\) is generated by the reverse SDE as \(\mathrm{d}\boldsymbol{a}_t = [f(\boldsymbol{a}_t, t) - g(t)^2 s_{\hat{\theta}}(\boldsymbol{a}_t; \boldsymbol{s}, t)] \mathrm{d}t + g(t) \mathrm{d}\bar{\mathbf{w}}\), where \(\bar{\mathbf{w}}\) is a standard Wiener process in the reverse-time direction. 

\section{Method}


\subsection{An Optimal Transport View of Policy Learning}
We approach the policy optimization problem from the perspective of optimal transport. Considering Eq.~\ref{eq:mon}, by substituting the cost function \(c(\boldsymbol{x}, \boldsymbol{a})\) with the critic \(Q(\boldsymbol{s}, \boldsymbol{a})\) and viewing our policy \(\pi\) as a map that moves mass from the state distribution \(\mu(\boldsymbol{s})\), to the corresponding distribution of actions $\nu(\boldsymbol{a})$ given by an optimal behavior policy \(\pi^\beta(\cdot|\boldsymbol{s})\), we formulate the following primal state-conditioned Monge OT problem:  
\begin{equation}  \label{eq:monRL}
    \inf \left\{ M(\mathcal{\pi}) := \mathbb{E}_{\boldsymbol{s}\sim\mu} \left[-Q^\beta(\boldsymbol{s}, \mathcal{\pi}(\boldsymbol{s}))\right] \; \Big| \; \mathcal{\pi}_\#\mu = \nu \right\}
\end{equation}

The objective is to minimize the expectation of the negative critic function \(Q^{\beta}\) while mapping exclusively to the distribution of actions given by the behavior policy \(\pi^\beta\), a concept also explored in offline RL work~\cite{asadulaev2025rethinking}.

\begin{proposition}\label{propOTRL}
Given an optimal behavior policy $\pi^\beta$ and a critic-based cost function $c = -Q^{\beta}$, let $\pi^*$ is the solution to Eq.~\ref{eq:monRL} with the $Q^{\beta}$ cost function. Then it holds that: $\mathcal{J}_{\text{RL}}(\pi^*) = \mathcal{J}_{\text{RL}}(\pi^\beta)$.
\end{proposition}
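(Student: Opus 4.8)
My plan is to prove the identity through the following chain of implications: $\pi^\beta$ is feasible for \eqref{eq:monRL}, hence $M(\pi^*)\le M(\pi^\beta)$; optimality of $\pi^\beta$ as a policy makes $\pi^\beta$ simultaneously an optimal transport map, forcing $M(\pi^*)=M(\pi^\beta)$; equality of transport costs pins down $\pi^*$ to act greedily with respect to $Q^\beta$ on $\mathrm{supp}(\mu)$; and the performance-difference identity then converts ``greedy on $\mathrm{supp}(\mu)$'' into the claimed equality of returns.

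First I would record two facts. (i) By construction $\nu=\pi^\beta_\#\mu$, so $\pi^\beta$ meets the marginal constraint in \eqref{eq:monRL} and is feasible; since $\pi^*$ is the infimizer, $M(\pi^*)\le M(\pi^\beta)$. (ii) Since $\pi^\beta$ is an optimal policy, its value function $V^\beta(s):=Q^\beta(s,\pi^\beta(s))$ obeys the Bellman optimality inequality $Q^\beta(s,a)\le V^\beta(s)$ for every $(s,a)$; hence for \emph{any} feasible map $\pi$ we have $-Q^\beta(s,\pi(s))\ge -Q^\beta(s,\pi^\beta(s))$ pointwise, and integrating against $\mu$ gives $M(\pi)\ge M(\pi^\beta)$. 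Combining (i) and (ii) yields $M(\pi^*)=M(\pi^\beta)$, i.e.\ $\pi^\beta$ is itself a Monge optimizer.

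Next I would exploit this equality. Rewriting it as $\mathbb{E}_{s\sim\mu}\!\left[V^\beta(s)-Q^\beta(s,\pi^*(s))\right]=0$ with a pointwise nonnegative integrand (again by Bellman optimality), I get $Q^\beta(s,\pi^*(s))=V^\beta(s)$ for $\mu$-almost every $s$; that is, $\pi^*(s)$ maximizes $Q^\beta(s,\cdot)$ on $\mathrm{supp}(\mu)$. Finally I would invoke the performance-difference identity $\mathcal{J}_{\text{RL}}(\pi^*)-\mathcal{J}_{\text{RL}}(\pi^\beta)=\tfrac{1}{1-\kappa}\,\mathbb{E}_{s\sim d^{\pi^*}}\!\left[Q^\beta(s,\pi^*(s))-V^\beta(s)\right]$, where $d^{\pi^*}$ is the discounted state-visitation distribution induced by $\pi^*$. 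Under the coverage condition $\mathrm{supp}(d^{\pi^*})\subseteq\mathrm{supp}(\mu)$ the bracket vanishes almost everywhere, so $\mathcal{J}_{\text{RL}}(\pi^*)=\mathcal{J}_{\text{RL}}(\pi^\beta)$; note the reverse inequality $\mathcal{J}_{\text{RL}}(\pi^*)\le\mathcal{J}_{\text{RL}}(\pi^\beta)$ is automatic from optimality of $\pi^\beta$, so only the single direction above is actually needed.

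The main obstacle is exactly this last transfer step: \eqref{eq:monRL} constrains $\pi^*$ only on $\mathrm{supp}(\mu)$, so without a coverage (or absolute-continuity) assumption linking $d^{\pi^*}$ to $\mu$, a solution of \eqref{eq:monRL} could behave sub-optimally off-support yet still be a Monge optimizer. I would therefore either state such a coverage condition as an explicit hypothesis, or take $\mu$ to be a visitation distribution with full support on the reachable set. A secondary, routine point is justifying the Monge formulation itself — existence of a measurable deterministic optimal policy $\pi^\beta$ and well-posedness of $\pi^\beta_\#\mu$ — which is standard for the MDP structure assumed here.
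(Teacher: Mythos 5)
Your proposal is correct and follows essentially the same route as the paper: both arguments reduce the claim to showing that $\pi^*$ is greedy with respect to $Q^\beta$ on the relevant states and then invoke the performance-difference lemma of Kakade and Langford. The differences are in rigor rather than in strategy. Where the paper simply asserts that the Monge solution takes the form $\arg\max_{\boldsymbol{a}\in\mathrm{Supp}(\pi^\beta(\cdot|\boldsymbol{s}))}Q^\beta(\boldsymbol{s},\boldsymbol{a})$ and that every action in that support is optimal, you derive greediness cleanly from the sandwich $M(\pi^\beta)\le M(\pi^*)\le M(\pi^\beta)$ (feasibility of $\pi^\beta$ plus the Bellman-optimality bound $Q^\beta(s,a)\le V^\beta(s)$) together with the pointwise nonnegativity of $V^\beta(s)-Q^\beta(s,\pi^*(s))$; this is a genuine improvement, since the paper's $\arg\max$ characterization of $\pi^*$ does not literally follow from the push-forward constraint $\pi_\#\mu=\nu$ alone. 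More importantly, the coverage issue you flag is real and is silently elided in the paper: the performance-difference lemma weights the advantage by the discounted visitation distribution $d^{\pi^*}$, whereas the paper writes the expectation over $\mu$, which is only legitimate if $\mathrm{Supp}(d^{\pi^*})\subseteq\mathrm{Supp}(\mu)$ (or if $\mu$ is itself taken to be that visitation distribution). Your explicit statement of this hypothesis is the honest way to close the argument; without it a Monge optimizer could act sub-optimally off $\mathrm{Supp}(\mu)$ and the conclusion would fail.
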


The proof is given in Appendix~\ref{app:ProofP1}. Proposition~\ref{propOTRL} offers valuable insights into the connection between optimal transport theory and RL by establishing an equivalent relationship between the optimal transport map and the optimal policy. Meanwhile, given the paired state-action data derived from an expert policy, the IL problem can be reframed as achieving  a conditional optimal transport map (i.e., diffusion policy). This link indicates that the transformations defined by the optimal transport map can effectively integrate RL with IL. We will next demonstrate how to use an estimated optimal transport plan (as intuitively illustrated in Fig.~\ref{fig:toys} of Appendix) to serve as a guide to utilize reinforcement learning to optimize the pre-trained diffusion policy via imitation learning.


\subsection{OT-Guided Conditional Denoising Score Matching}
In IL setting, we denote the condition data as \( \boldsymbol{s}_{\text{cond}}(\boldsymbol{a}) \) for a target action \( \boldsymbol{a} \), and \( \mu \) is the measure by push-forwarding \( \nu \) using \( \boldsymbol{s}_{\text{cond}} \), i.e.,  $  
\mu(\boldsymbol{s}) = \sum_{\{\boldsymbol{a} : \boldsymbol{s}_{\text{cond}}(\boldsymbol{a}) = \boldsymbol{s}\}} \nu(\boldsymbol{a})  $
over the paired training dataset $\mathcal{D}$. Section~\ref{sec:CSDP} provides a explicit reformulation for the conditional score-based diffusion policy with the paired training data.

\begin{proposition} \label{prop2}
Let \( \mathcal{C}(\boldsymbol{s}, \boldsymbol{a}) = \frac{1}{\mu(\boldsymbol{s})} \delta(\boldsymbol{s} - \boldsymbol{s}_{\text{cond}}(\boldsymbol{a})) \) where \( \delta \) is the Dirac delta function, then \( \mathcal{J}_{\text{DSM}}(\theta) \) in Eq.~\ref{eq:DSM} can be reformulated as  
\begin{align}\label{eq:DSMCC}
\mathcal{J}_{\text{CDSM}}(\theta) = & \mathbb{E}_{t} w_{t} \mathbb{E}_{\boldsymbol{s} \sim \mu} \mathbb{E}_{\boldsymbol{a} \sim \nu} \mathcal{C}(\boldsymbol{s},\boldsymbol{a}) \mathbb{E}_{\boldsymbol{a}_{t} \sim \nu_{t}|0}(\boldsymbol{a}_{t}|\boldsymbol{a}) \notag \\
&\left\| s_{\theta}(\boldsymbol{a}_{t}; \boldsymbol{s}, t) - \nabla_{\boldsymbol{a}_{t}} \log \nu_{t|0}(\boldsymbol{a}_{t}|\boldsymbol{a}) \right\|^{2}_{2}. 
\end{align}
Furthermore, \( \upsilon(\boldsymbol{s},\boldsymbol{a}) = \mathcal{C}(\boldsymbol{s},\boldsymbol{a}) \mu(\boldsymbol{s}) \nu(\boldsymbol{a}) \) is a joint distribution for marginal distributions \( \mu \) and \( \nu \).  
\end{proposition}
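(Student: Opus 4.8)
The plan is to prove the two claims in order, first the reformulation of the loss and then the marginal property of $\upsilon$, using in both cases the ``sifting'' identity for the Dirac delta together with the defining relation $\mu(\boldsymbol{s}) = \sum_{\{\boldsymbol{a}\,:\,\boldsymbol{s}_{\text{cond}}(\boldsymbol{a})=\boldsymbol{s}\}}\nu(\boldsymbol{a})$ (read as $\mathrm{d}\mu(\boldsymbol{s}) = (\boldsymbol{s}_{\text{cond}})_{\#}\mathrm{d}\nu(\boldsymbol{s})$ in the continuous case).

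For the first claim, I would start from $\mathcal{J}_{\text{DSM}}(\theta)$ and work on the inner part $\mathbb{E}_{\boldsymbol{a}_0\sim\nu}[\Phi(\boldsymbol{s}_{\text{cond}}(\boldsymbol{a}_0),\boldsymbol{a}_0)]$, where $\Phi(\boldsymbol{s},\boldsymbol{a}) \overset{\text{def.}}{=} \mathbb{E}_{\boldsymbol{a}_t\sim\nu_{t|0}(\boldsymbol{a}_t|\boldsymbol{a})}\lVert s_\theta(\boldsymbol{a}_t;\boldsymbol{s},t)-\nabla_{\boldsymbol{a}_t}\log\nu_{t|0}(\boldsymbol{a}_t|\boldsymbol{a})\rVert_2^2$ collects everything except the $\mathbb{E}_t w_t$ prefactor and the expectation over $\boldsymbol{a}_0$. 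Inserting the identity $1=\int\delta(\boldsymbol{s}-\boldsymbol{s}_{\text{cond}}(\boldsymbol{a}))\,\mathrm{d}\boldsymbol{s}$ (a sum over $\boldsymbol{s}$ in the discrete setting) and using the sifting property to replace $\boldsymbol{s}_{\text{cond}}(\boldsymbol{a})$ by $\boldsymbol{s}$ inside $\Phi$, one gets $\mathbb{E}_{\boldsymbol{a}\sim\nu}\int\delta(\boldsymbol{s}-\boldsymbol{s}_{\text{cond}}(\boldsymbol{a}))\,\Phi(\boldsymbol{s},\boldsymbol{a})\,\mathrm{d}\boldsymbol{s}$. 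Multiplying and dividing by $\mu(\boldsymbol{s})$ (legitimate off the $\mu$-null set where $\mu(\boldsymbol{s})=0$, on which $\delta(\boldsymbol{s}-\boldsymbol{s}_{\text{cond}}(\boldsymbol{a}))$ contributes nothing against $\mathrm{d}\nu(\boldsymbol{a})$) turns $\tfrac{1}{\mu(\boldsymbol{s})}\delta(\boldsymbol{s}-\boldsymbol{s}_{\text{cond}}(\boldsymbol{a}))$ into exactly $\mathcal{C}(\boldsymbol{s},\boldsymbol{a})$ and produces $\mathbb{E}_{\boldsymbol{s}\sim\mu}\mathbb{E}_{\boldsymbol{a}\sim\nu}\,\mathcal{C}(\boldsymbol{s},\boldsymbol{a})\,\Phi(\boldsymbol{s},\boldsymbol{a})$; reinstating $\mathbb{E}_t w_t$ gives $\mathcal{J}_{\text{CDSM}}(\theta)$.

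For the second claim I would verify the three defining properties of a coupling of $\mu$ and $\nu$. Nonnegativity is immediate since $\mathcal{C}\ge 0$, $\mu\ge 0$, $\nu\ge 0$ (equivalently $\upsilon(\boldsymbol{s},\boldsymbol{a})=\delta(\boldsymbol{s}-\boldsymbol{s}_{\text{cond}}(\boldsymbol{a}))\,\mathrm{d}\nu(\boldsymbol{a})$ is a nonnegative measure). For the $\boldsymbol{a}$-marginal, $\int\upsilon(\boldsymbol{s},\boldsymbol{a})\,\mathrm{d}\boldsymbol{a}=\int\delta(\boldsymbol{s}-\boldsymbol{s}_{\text{cond}}(\boldsymbol{a}))\,\mathrm{d}\nu(\boldsymbol{a})$, which is precisely $\mu(\boldsymbol{s})$ by the definition of $\mu$ as the push-forward of $\nu$ under $\boldsymbol{s}_{\text{cond}}$. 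For the $\boldsymbol{s}$-marginal, $\int\upsilon(\boldsymbol{s},\boldsymbol{a})\,\mathrm{d}\boldsymbol{s}=\big(\int\delta(\boldsymbol{s}-\boldsymbol{s}_{\text{cond}}(\boldsymbol{a}))\,\mathrm{d}\boldsymbol{s}\big)\nu(\boldsymbol{a})=\nu(\boldsymbol{a})$, and integrating once more shows total mass $1$. Hence $\upsilon\in\Gamma(\mu,\nu)$.

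The only real subtlety — and the step I expect to need the most care — is making the Dirac-delta manipulations rigorous, i.e. interpreting $\upsilon$ as the measure on $\mathcal{S}\times\mathcal{A}$ concentrated on the graph $\{(\boldsymbol{s}_{\text{cond}}(\boldsymbol{a}),\boldsymbol{a})\}$ and justifying the division by $\mu(\boldsymbol{s})$; this is handled by noting that all the relevant integrals are against $\mathrm{d}\nu(\boldsymbol{a})$, the set $\{\boldsymbol{s}:\mu(\boldsymbol{s})=0\}$ is $\mu$-null by construction, and in the discrete-state case everything reduces to elementary sums with the Kronecker delta, so no genuine measure-theoretic obstruction remains.
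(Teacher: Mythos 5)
Your proposal is correct and follows essentially the same route as the paper's proof: both parts rest on the Dirac-delta sifting identity, the cancellation $\mu(\boldsymbol{s})\cdot\tfrac{1}{\mu(\boldsymbol{s})}=1$, and the push-forward relation $\mu = (\boldsymbol{s}_{\text{cond}})_{\#}\nu$ (the paper just runs the loss computation in the reverse direction, from $\mathcal{J}_{\text{CDSM}}$ down to $\mathcal{J}_{\text{DSM}}$, and computes the $\boldsymbol{a}$-marginal by expanding $\delta(\boldsymbol{s}-\boldsymbol{s}_{\text{cond}}(\boldsymbol{a}))$ as a sum over the preimage $\{\boldsymbol{a}':\boldsymbol{s}_{\text{cond}}(\boldsymbol{a}')=\boldsymbol{s}\}$, which is exactly the identity you invoke). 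Your remark on the $\mu$-null set where the division by $\mu(\boldsymbol{s})$ could fail is a point the paper glosses over, and you handle it correctly.
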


The proof is given in Appendix~\ref{app:ProofP2}. In Proposition~\ref{prop2}, the coupling relationship of condition state and target action is explicitly modeled in \( \mathcal{C}(\boldsymbol{s},\boldsymbol{a}) \). 
Nevertheless, in contrast to IL, the definition of \( \mathcal{C}(\boldsymbol{s},\boldsymbol{a}) \) in RL is not explicit due to the absence of an optimal paired relationship between \( \boldsymbol{s}, \boldsymbol{a} \).
Fortunately, the joint distribution \( \upsilon \) exhibits a similar formulation to the transport plan \( \gamma \) in Eq.~\ref{eq:kavRplan}. We therefore use \( L_2 \)-regularized OT to model the coupling relationship between state \( \boldsymbol{s} \) and action \( \boldsymbol{a} \) for unpaired settings. Specifically, given a $Q$ network learnt by RL, the \( L_2 \)-regularized OT is applied to the distributions \( \mu, \nu \) to approximately construct a conditional transport plan $\hat{\gamma}(\boldsymbol{a}|\boldsymbol{s}) = H(\boldsymbol{s}, \boldsymbol{a})\nu(\boldsymbol{a})$, and the coupling relationship of the condition data \( \boldsymbol{s} \) and target data \( \boldsymbol{a} \) is built by the compatibility function \( H(\boldsymbol{s},\boldsymbol{a}) \). We then extend the formulation for paired setting in Eq.~\ref{eq:DSMCC} by replacing \( \mathcal{C} \) with \( H \) to develop the training objective for unpaired setting, which is given by  
\begin{align}\label{eq:HDSM}
\mathcal{J}_{\text{HDSM}}(\theta) = & \mathbb{E}_{t} w_{t} \mathbb{E}_{\boldsymbol{s} \sim \mu} \mathbb{E}_{\boldsymbol{a} \sim \nu} H(\boldsymbol{s},\boldsymbol{a}) \mathbb{E}_{\boldsymbol{a}_{t} \sim \nu_{t}|0}(\boldsymbol{a}_{t}|\boldsymbol{a}) \notag \\
&\left\| s_{\theta}(\boldsymbol{a}_{t}; \boldsymbol{s}, t) - \nabla_{\boldsymbol{a}_{t}} \log \nu_{t|0}(\boldsymbol{a}_{t}|\boldsymbol{a}) \right\|^{2}_{2}. 
\end{align}

In Eq.~\ref{eq:HDSM}, \( H \) is a ``soft" coupling relationship of state data and action data, because there may exist multiple \( a \) satisfying \( H(\boldsymbol{s},\boldsymbol{a}) > 0 \) for each \( s \). 
We minimize \( \mathcal{J}_{\text{HDSM}}(\theta) \) to train the conditional score-based model \( s_{\theta}(\boldsymbol{a}_t; \boldsymbol{s}, t) \). 

\begin{theorem}\label{theorem1}
  For \( \boldsymbol{s} \sim \mu \), consider the forward SDE   
\( \mathrm{d}\boldsymbol{a}_t = f(\boldsymbol{a}_t, t) \mathrm{d}t + g(t) \mathrm{d}\mathbf{w} \) with \( \boldsymbol{a}_0 \sim \hat{\gamma}(\cdot | \boldsymbol{s}) \) and \( t \in [0, T] \). Let \( \nu_t(\boldsymbol{a}_t|\boldsymbol{s}) \) be the distribution of \( \boldsymbol{a}_t \) and   
\(\mathcal{J}_{\text{CSM}}(\theta) = \mathbb{E}_t w_t \mathbb{E}_{\boldsymbol{s} \sim \mu} \mathbb{E}_{\boldsymbol{a}_t \sim \nu_t(\boldsymbol{a}_t|\boldsymbol{s})} \| s_\theta(\boldsymbol{a}_t; \boldsymbol{s}, t) - \nabla_{\boldsymbol{a}_t} \log \nu_t(\boldsymbol{a}_t|\boldsymbol{s}) \|^2_2,  \)
then we have  \(\nabla_\theta \mathcal{J}_{\text{HDSM}}(\theta) = \nabla_\theta \mathcal{J}_{\text{CSM}}(\theta).  \)
\end{theorem}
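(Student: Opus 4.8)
The plan is to reduce the claim to the classical equivalence between denoising score matching and explicit score matching (Vincent-type identity), applied conditionally on each state $\boldsymbol{s}$, after absorbing the compatibility weight $H$ into the source distribution.

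First I would rewrite $\mathcal{J}_{\text{HDSM}}$ as a conditional denoising objective. By the definition $\hat{\gamma}(\boldsymbol{a}|\boldsymbol{s}) = H(\boldsymbol{s},\boldsymbol{a})\nu(\boldsymbol{a})$ together with the marginal constraint $\gamma_{\lambda}\in\Gamma(\mu,\nu)$ inherited from Eq.~\ref{eq:kavRplan} (which gives $\int H(\boldsymbol{s},\boldsymbol{a})\,\mathrm{d}\nu(\boldsymbol{a}) = 1$ for $\mu$-a.e.\ $\boldsymbol{s}$), the map $\boldsymbol{a}\mapsto H(\boldsymbol{s},\boldsymbol{a})\nu(\boldsymbol{a})$ is a genuine probability density. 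Hence for each fixed $t$ and $\boldsymbol{s}$, $\mathbb{E}_{\boldsymbol{a}\sim\nu}H(\boldsymbol{s},\boldsymbol{a})\,\mathbb{E}_{\boldsymbol{a}_t\sim\nu_{t|0}(\cdot|\boldsymbol{a})}[\,\cdot\,] = \mathbb{E}_{\boldsymbol{a}_0\sim\hat{\gamma}(\cdot|\boldsymbol{s})}\,\mathbb{E}_{\boldsymbol{a}_t\sim\nu_{t|0}(\cdot|\boldsymbol{a}_0)}[\,\cdot\,]$, so $\mathcal{J}_{\text{HDSM}}$ is exactly the conditional denoising-score-matching objective for the forward SDE initialized at $\boldsymbol{a}_0\sim\hat{\gamma}(\cdot|\boldsymbol{s})$, whose time-$t$ marginal is the $\nu_t(\cdot|\boldsymbol{s})$ appearing in the theorem, i.e.\ $\nu_t(\boldsymbol{a}_t|\boldsymbol{s}) = \int \nu_{t|0}(\boldsymbol{a}_t|\boldsymbol{a}_0)\,\hat{\gamma}(\boldsymbol{a}_0|\boldsymbol{s})\,\mathrm{d}\boldsymbol{a}_0$.

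Next, since both $\mathcal{J}_{\text{HDSM}}$ and $\mathcal{J}_{\text{CSM}}$ have the form $\mathbb{E}_t w_t\,\mathbb{E}_{\boldsymbol{s}\sim\mu}[\,\cdots\,]$ with $\theta$-independent weights and outer distributions, it suffices to prove the gradient identity for each fixed $(\boldsymbol{s},t)$. Fixing $(\boldsymbol{s},t)$, I would expand the squared norm in each objective as $\|s_\theta\|_2^2 - 2\langle s_\theta,\text{target}\rangle + \|\text{target}\|_2^2$. The $\|s_\theta\|_2^2$ term yields $\mathbb{E}_{\boldsymbol{a}_t\sim\nu_t(\cdot|\boldsymbol{s})}\|s_\theta(\boldsymbol{a}_t;\boldsymbol{s},t)\|_2^2$ in both cases, because the $\boldsymbol{a}_t$-marginal of $\hat{\gamma}(\boldsymbol{a}_0|\boldsymbol{s})\nu_{t|0}(\boldsymbol{a}_t|\boldsymbol{a}_0)$ is $\nu_t(\boldsymbol{a}_t|\boldsymbol{s})$; the $\|\text{target}\|_2^2$ term is $\theta$-independent and vanishes under $\nabla_\theta$. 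For the cross term I would use Fubini and the elementary identity $\nu_{t|0}(\boldsymbol{a}_t|\boldsymbol{a}_0)\nabla_{\boldsymbol{a}_t}\log\nu_{t|0}(\boldsymbol{a}_t|\boldsymbol{a}_0) = \nabla_{\boldsymbol{a}_t}\nu_{t|0}(\boldsymbol{a}_t|\boldsymbol{a}_0)$, integrate against $\hat{\gamma}(\cdot|\boldsymbol{s})$ in $\boldsymbol{a}_0$, pull $\nabla_{\boldsymbol{a}_t}$ out of the integral, and apply the marginalization identity above to obtain $\nabla_{\boldsymbol{a}_t}\nu_t(\boldsymbol{a}_t|\boldsymbol{s}) = \nu_t(\boldsymbol{a}_t|\boldsymbol{s})\nabla_{\boldsymbol{a}_t}\log\nu_t(\boldsymbol{a}_t|\boldsymbol{s})$; this shows the HDSM cross term equals the CSM cross term $\mathbb{E}_{\boldsymbol{a}_t\sim\nu_t(\cdot|\boldsymbol{s})}\langle s_\theta(\boldsymbol{a}_t;\boldsymbol{s},t),\nabla_{\boldsymbol{a}_t}\log\nu_t(\boldsymbol{a}_t|\boldsymbol{s})\rangle$. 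Collecting the three terms and taking $\nabla_\theta$ gives $\nabla_\theta\mathcal{J}_{\text{HDSM}}(\theta) = \nabla_\theta\mathcal{J}_{\text{CSM}}(\theta)$.

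The main obstacle is not conceptual but a matter of justifying the interchanges: (i) differentiating under the integral sign in $\theta$ (dominated convergence, valid when $s_\theta$ is $C^1$ in $\theta$ with locally integrable bounds, a standard assumption on the network); (ii) exchanging $\nabla_{\boldsymbol{a}_t}$ with the $\boldsymbol{a}_0$-integral, which is immediate since $\nu_{t|0}$ is a Gaussian transition kernel with smooth, rapidly decaying density; and (iii) ensuring $\hat{\gamma}(\cdot|\boldsymbol{s})$ is a bona fide probability measure, which follows from the first-order optimality conditions of the $L_2$-regularized dual (Eq.~\ref{eq:kavRplan}) but should be carried as a standing assumption in case the dual variables $u,v$ are only approximately optimal. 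Everything else mirrors the standard denoising-score-matching computation, now done fiberwise over $\boldsymbol{s}$.
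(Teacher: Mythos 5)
Your proposal is correct and follows essentially the same route as the paper: absorb the weight $H(\boldsymbol{s},\boldsymbol{a})\nu(\boldsymbol{a})$ into the conditional plan $\hat{\gamma}(\cdot|\boldsymbol{s})$, reduce to fixed $(\boldsymbol{s},t)$, and invoke the denoising-vs-explicit score-matching equivalence to show the two objectives differ by a $\theta$-independent constant. The only difference is that the paper cites Vincent's identity for that last step while you prove it inline via the three-term expansion, and you are somewhat more careful than the paper in noting that $\int H(\boldsymbol{s},\boldsymbol{a})\,\mathrm{d}\nu(\boldsymbol{a})=1$ is needed for $\hat{\gamma}(\cdot|\boldsymbol{s})$ to be a genuine probability measure.
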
 
We give the proof in Appendix~\ref{app:ProofT1}. Theorem~\ref{theorem1} indicates that the trained \( s_\theta(\boldsymbol{a}_t; \boldsymbol{s}, t) \) using Eq.~\ref{eq:HDSM} approximates \( \nabla_{\boldsymbol{a}_t} \log \nu_t(\boldsymbol{a}_t|\boldsymbol{s}) \). Based on this, we can interpret our approach as follows. Given a condition data \( \boldsymbol{s} \), we sample action data \( \boldsymbol{a}_0 \) from the conditional transport plan \( \hat{\gamma}(\boldsymbol{a}_0|\boldsymbol{s}) \), produce \( \boldsymbol{a}_t \) by the forward SDE solver (examples given in Appendix~\ref{app:TFNoise}), and train \( s_\theta(\boldsymbol{a}_t; \boldsymbol{s}, t) \) to approximate \( \nabla_{\boldsymbol{a}_t} \log \nu_t(\boldsymbol{a}_t|\boldsymbol{s}) \). 

\textbf{Sample Generation} We denote the trained conditional score-based model as $s_{\hat{\theta}}(\boldsymbol{a};\boldsymbol{s}, t)$ where $\hat{\theta}$ is the value of $\theta$ after training. Given the condition state $\boldsymbol{s}$, we generate action samples by the following SDE:
\begin{equation}\label{eq:sampleactionSDE}
    \mathrm{d}\boldsymbol{a}_t = \left[f(\boldsymbol{a})_t, t) - g(t)^2 s_{\hat{\theta}}(\boldsymbol{a};\boldsymbol{s}, t)\right]\mathrm{d}t + g(t)\mathrm{d}\bar{\mathbf{w}}.
\end{equation}
Numerical solvers such as the Euler-Maruyama method, DDIM, or DPM-Solver can be employed to efficiently solve this reverse SDE, enabling the generation of high-quality action samples~\cite{ddim, lu2022dpm}.

\subsection{Expert Data Masked Optimal Transport}
For the computation of $H$, a value-based reinforcement learning can provide an estimated Q-network, while optimizing the optimal tranport problem gives $u, v$, which is often computationally challenging because OT needs transport all the mass of state to exactly match the mass of action distribution, which presents computational challenges.
Fortunately, in imitation learning, expert demonstrations $\mathcal{D}^\beta$ have provide matched pairs of state and action data points (called ``keypoints'') \( \mathcal{K} = \{(\boldsymbol{s}_i, \boldsymbol{a}_i)\}_{i=1}^{N} \). 
These keypoints are not only valuable but also crucial for investigating how to leverage them to guide the correct matching in OT. 
We introduce masked OT~\cite{gu2022keypoint} to leverage the given matched keypoints to guide the correct transport in OT by preserving the relation of each data point to the keypoints:
\begin{align}\label{eq:kavmask}
&\inf \left\{ K(\tilde{\gamma}) := \mathbb{E}_{\boldsymbol{s} \times \boldsymbol{a}\sim\tilde{\gamma}} [g(\boldsymbol{s}, \boldsymbol{a})] \; \Big| \; \tilde{\gamma} \in \tilde{\Gamma}(\mu, \nu; \mathbf{m}) \right\}
\end{align}
where the transport plan \( \mathbf{m} \odot \tilde{\gamma} \) is \( (\mathbf{m} \odot \tilde{\gamma})(\boldsymbol{s},\boldsymbol{a}) = \mathbf{m}(s,a)\tilde{\gamma}(\boldsymbol{s},\boldsymbol{a}) \), and \( \mathbf{m} \) is a binary mask function. Given a pair of keypoints \( (\boldsymbol{s}_i, \boldsymbol{a}_i) \in \mathcal{K} \), then \( \mathbf{m}(\boldsymbol{s}_{i}, \boldsymbol{a}_{i}) = 1, \mathbf{m}(\boldsymbol{s}_{i}, \boldsymbol{a}) = 0 \)  and \( \mathbf{m}(\boldsymbol{s},\boldsymbol{a}) = 1 \) if \( \boldsymbol{s}, \boldsymbol{a} \) do not coincide with any keypoint. 
The mask-based modeling of the transport plan ensures that the keypoint pairs are always matched in the derived transport plan. \( g \) in Eq.~\ref{eq:kavmask} is defined as \( g(\boldsymbol{s},\boldsymbol{a}) = d(R_{\boldsymbol{s}}^s, R_{\boldsymbol{a}}^t) \), where \( R_{\boldsymbol{s}}^s, R_{\boldsymbol{a}}^t \in (0, 1)^N \) model the vector of relation of \( \boldsymbol{s}, \boldsymbol{a} \) to each of the paired keypoints in state and action space respectively, and \( d \) is the Jensen–Shannon divergence. The \( i \)-th elements of \( R_{\boldsymbol{s}} \) and \( R_{\boldsymbol{a}} \) are respectively defined by  
\begin{equation}
\left\{
\begin{aligned}
R_{\boldsymbol{s},i}^s = \frac{\exp(-c(\boldsymbol{s},s_k)/\rho)}{\sum_{j=1}^{N} \exp(-c(\boldsymbol{s},s_j)/\rho)}, \\
R_{\boldsymbol{a},i}^t = \frac{\exp(-c(\boldsymbol{a},a_k)/\rho)}{\sum_{j=1}^{N} \exp(-c(\boldsymbol{a},a_j)/\rho)},  
\end{aligned}
\right.
\end{equation}
where $\rho$ is a commonly used temperature in the softmax function. Further, the masked matrix is introduced into the duality of the $L^2$ reguarized OT problem, and the penalty term $F_{\lambda}$ is updated as:
\begin{align}  \label{eq:kavRdualK}
    & \sup_{u,v} \mathbb{E}_{(\boldsymbol{s},\boldsymbol{a}) \sim \mu \times \nu} \left[ u(\boldsymbol{s}) + v(\boldsymbol{a}) + \tilde{F}_{\lambda}(u(\boldsymbol{s}), v(\boldsymbol{a})) \right], \\
    &\tilde{F}_{\lambda}(u(\boldsymbol{s}), v(\boldsymbol{a})) =  -\frac{1}{4\lambda}\mathbf{m}(\boldsymbol{s},\boldsymbol{a})(u(\boldsymbol{s}) + v(\boldsymbol{a}) - g(\boldsymbol{s},\boldsymbol{a}))^{2}_{+}. \notag
\end{align}
The dual~\ref{eq:kavRdual} and \ref{eq:kavRdualK} are unconstrained concave, which can be maximized through stochastic gradient methods by sampling batches from $\mu \times \nu$. Following~\cite{seguy2018large}, we use deep neural networks for their ability to approximate $u_\omega, v_\omega$ with the parameters $\omega$ and the estimate of OT plan is
\begin{align}\label{eq:kavRplanK}
& \hat{\gamma}(\boldsymbol{s}, \boldsymbol{a}) = H(\boldsymbol{s}, \boldsymbol{a})\mathrm{d}\mu(\boldsymbol{s})\mathrm{d}\nu(\boldsymbol{a}), \\
&\text{where} \ H(\boldsymbol{s}, \boldsymbol{a}) = \frac{1}{2\lambda}(u_\omega(\boldsymbol{s}) + v_\omega(\boldsymbol{a}) - g(\boldsymbol{s},\boldsymbol{a}))_{+}. \notag
\end{align}
The pseudo-codeis given in Appendix~\ref{app:alOT}.

\begin{algorithm}[tb]
   \caption{Online Score-Based Diffusion Policy Training}
   \label{alg:traingpolicybrief}
\begin{algorithmic}
   \STATE {\bfseries Input:} The pre-trained imitation policy, expert demonstrations $\mathcal{D}$, initialzed $Q$-network and replay buffer $\mathcal{B}$.
   \STATE {\bfseries Output: } Trained conditional score-based policy $s_\theta$.
   \FOR{iteration = $1, 2, \dots$}
   \STATE Learn $u_\omega, v_\omega$ by optimizing the dual problem~\ref{eq:kavRdualK}.
   \WHILE{ no done with episode}
   \STATE Observe current state $s$;
   \STATE Sample $a_l$ by the SDE~\ref{eq:sampleactionSDE}, $l = 1, \dots, L$;
   \STATE Compute $H(s, a_l)$ using Eq.~\ref{eq:kavRplanK};
   \STATE Normalize: $p_l = \frac{H(s,a_l)}{\sum_j H(s, a_j)}$;
   \STATE Select $a $ as a categorical from $p_l$;
   \STATE Store transition in $\mathcal{B}$.
   \ENDWHILE
   \STATE Learn $Q$-network by employed RL algorithm.
   \STATE Update $\theta$ of score model $s_\theta$ by fitting noise.
   \ENDFOR
\end{algorithmic}
\end{algorithm}

\subsection{OT-Guided Training}
To implement \( \mathcal{J}_{\text{HDSM}}(\theta) \) in Eq.~\ref{eq:HDSM} using training samples to optimize \( \theta \), we can sample mini-batch data from replay buffer, and then compute \( H(\boldsymbol{s}, \boldsymbol{a}) \) and \( \mathcal{J}_{\boldsymbol{s},\boldsymbol{a}} = \mathbb{E}_t w_t \mathbb{E}_{\boldsymbol{a}_t \sim \nu_{t|0}(\boldsymbol{a}_t|)} \| s_\theta(\boldsymbol{a}_t; \boldsymbol{s}, t) - \nabla_{\boldsymbol{a}_t} \log \nu_{t|0}(\boldsymbol{a}_t|\boldsymbol{a}) \|^2_2 \) over the pairs of \( (\boldsymbol{s}, \boldsymbol{a}) \) in \( \mathcal{S} \) and \( \mathcal{A} \). However, such a strategy is sub-optimal. This is because given a mini-batch of samples \( \boldsymbol{s} \) and \( \boldsymbol{a} \), for each source sample \( s \), there may not exist a target sample \( a \) in the mini-batch with a higher value of \( H(s,a) \) that matches condition data \( s \). Therefore, few or even no samples in a mini-batch contribute to the loss function, leading to a large bias of the computed loss and instability of the training. To tackle this challenge, we generate $L$ samples from policy model, and then use the compatibility function to reweight these actions, ultimately forming the intended policy when resampled. This approach is summarized in Algorithm~\ref{alg:traingpolicybrief}. 
In implementation, our approach can be used to replace the policy improvement step in multiple RL algorithms, while keeping the critic training as is. At evaluation time, we simply taking the action by setting $L=1$ to reduce computational requirements.

\section{Analysis}
The proposed OTPR essentially aims to develop a conditional score-based diffusion policy for data transport from state space to action space in OT. To generate samples from conditional OT plan $\gamma^*(\cdot|\boldsymbol{s})$, the algorithm involves two key module learning: the dual term $(u_\omega, v_\omega)$ and the score model $s_\theta$. In this section, we will provide an analysis from the perspective of optimal transport, illustrating how the two aforementioned processes establish the upper bound of the distance between the distribution $\nu^{\text{SDE}}(\boldsymbol{a}|\boldsymbol{s})$ of generated samples and the conditional optimal transport plan $\gamma^*(\boldsymbol{a}|\boldsymbol{s})$.

To be specific, we investigate the upper bound of the expected Wasserstein distance $\mathbb{E}_{\boldsymbol{s}\sim\mu}W_2(\nu^{\text{SDE}}(\cdot|\boldsymbol{s}), \gamma^*(\cdot|\boldsymbol{s}))$. 
Since $W_2(\cdot|\cdot)$ is a proper metric, we can conveniently leverage the triangle inequality to derive an upper bound for this expectation: 
$
\mathbb{E}_{\boldsymbol{s}\sim\mu}W_2(\nu^{\text{SDE}}(\cdot|\boldsymbol{s}), \gamma^*(\cdot|\boldsymbol{s})) \leq  
\mathbb{E}_{\boldsymbol{s}\sim\mu}W_2(\hat{\gamma}(\cdot|\boldsymbol{s}), \gamma^*(\cdot|\boldsymbol{s}) ) + \mathbb{E}_{\boldsymbol{s}\sim\mu}W_2(\nu^{\text{SDE}}(\cdot|\boldsymbol{s}), \hat{\gamma}(\cdot|\boldsymbol{s})),$
where $\hat{\gamma}$ is the estimated OT plan depending on $u_{\hat{\omega}}, v_{\hat{\omega}}$. This inequality provides a means to assess the upper bound by breaking it down into two more manageable comparisons.

To bound the first term, we denote the Lagrange function for $L_2$-regularized OTs in Eq.~\ref{eq:kavRdual} as $\mathcal{L}(\gamma, u, v)$ with dual variables $u, v$ as follows:
\begin{align}
\mathcal{L}(\gamma, u, v) &= \int \left( \gamma(\boldsymbol{s}, \boldsymbol{a}) + \lambda \frac{\gamma(\boldsymbol{s}, \boldsymbol{a})^2}{\mu(\boldsymbol{s}) \nu(\boldsymbol{s})} \right) \mathrm{d}\boldsymbol{s} \, \mathrm{d}\boldsymbol{a}  \notag \\
&+ \int u(\boldsymbol{s}) \left( \int \gamma(\boldsymbol{s}, \boldsymbol{a}) \mathrm{d}\boldsymbol{a} - \mu(\boldsymbol{s}) \right) \mathrm{d}\boldsymbol{s}   \\ 
&+ \int v(\boldsymbol{a}) \left( \int \gamma(\boldsymbol{s}, \boldsymbol{a}) \mathrm{d}\boldsymbol{s} - \nu(\boldsymbol{a}) \right) \mathrm{d}\boldsymbol{a}.\notag  
\end{align}
Because $\mathcal{L}(\gamma, u, v)$ is a sum of $K_\lambda(\gamma)$ and linear terms, the Lagrangian inherits $\lambda$-strong convexity in $L_1$-norm.
Given the trained $u_{\hat{\omega}}$ and $v_{\hat{\omega}}$ which are $\epsilon$-approximate maximizers of $\mathcal{J}_{\lambda}(u,v)$, the pseudo-plan $\hat{\gamma} = H(\boldsymbol{s},\boldsymbol{a};u_{\hat{\omega}},v_{\hat{\omega}})\mu(\boldsymbol{s})\nu(\boldsymbol{a})$ satisfies:
\begin{equation}
    \frac{\lambda}{2}\|\hat{\gamma}-\gamma^*\|^2_1 \leq \mathcal{L}(\hat{\gamma}, u_{\hat{\omega}}, v_{\hat{\omega}}) - \mathcal{L}(\gamma^*, u^*, v^*) \leq \epsilon
\end{equation}
Since the strong convexity of $\mathcal{L}$ implies a Polyak-Łojasiewicz (PL)  inequality, we have,
\begin{equation}
    \| \hat{\gamma} - \gamma^* \|_1 \leq \frac{1}{\lambda} \left\|\nabla_{\hat{\gamma}} L(\hat{\gamma}, u_{\hat{\omega}}, v_{\hat{\omega}}) \right\|_1 
\end{equation}
Consequently, we can derive an upper bound for the expected Wasserstein distance as follows: 
\begin{equation}
\mathbb{E}_{\boldsymbol{s}\sim\mu}W_2(\hat{\gamma}(\cdot|\boldsymbol{s}),\gamma^*(\cdot|\boldsymbol{s})) \leq \frac{\eta}{\lambda}\left\|\nabla_{\hat{\gamma}} L(\hat{\gamma}, u_{\hat{\omega}}, v_{\hat{\omega}}) \right\|_1,
\end{equation}
where $\eta = \max_{a,a'\in \mathcal{A}}\{\|a-a'\|_2\}$.

For the bound of $\mathbb{E}_{\boldsymbol{s}\sim\mu}W_2(\nu^{\text{SDE}}(\cdot|\boldsymbol{s}), \hat{\gamma}(\cdot|\boldsymbol{s}))$, it is difficult to get without explicit $f$ and $g$ given, but from the existing convergence guarantees for a general class of score-based generative models, we get $\mathbb{E}_{\boldsymbol{s}\sim\mu}W_2(\nu^{\text{SDE}}(\cdot|\boldsymbol{s}), \hat{\gamma}(\cdot|\boldsymbol{s})) \leq \epsilon$, which can be easily interpreted as two terms (1) the initialization of the algorithm at $\hat{\nu}_T(\cdot|\boldsymbol{s})$ instead of $\hat{\gamma}_T((\cdot|\boldsymbol{s}))$, (2) the discretization and score-matching errors in running the algorithm~\cite{DBLP:conf/nips/KwonFL22, DBLP:journals/corr/abs-2311-11003}.


\begin{figure*}[!htb]
\vskip 0.1in
	\begin{center}
		\subfigure{\includegraphics[trim = 4mm 0mm 0mm 0mm,  clip,width=0.31\linewidth ]{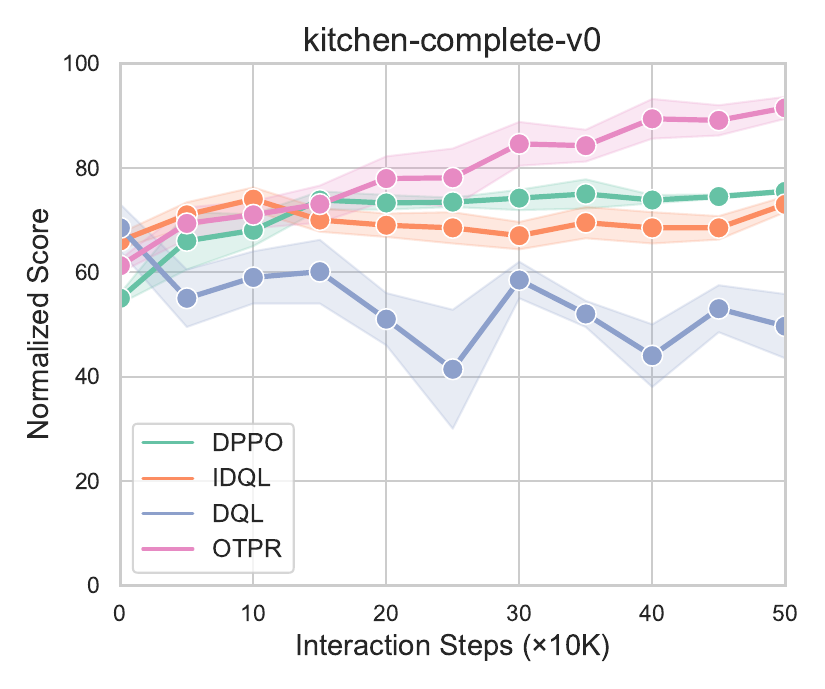}}
		\subfigure{\includegraphics[trim = 4mm 0mm 0mm 0mm,  clip,width=0.31\linewidth ]{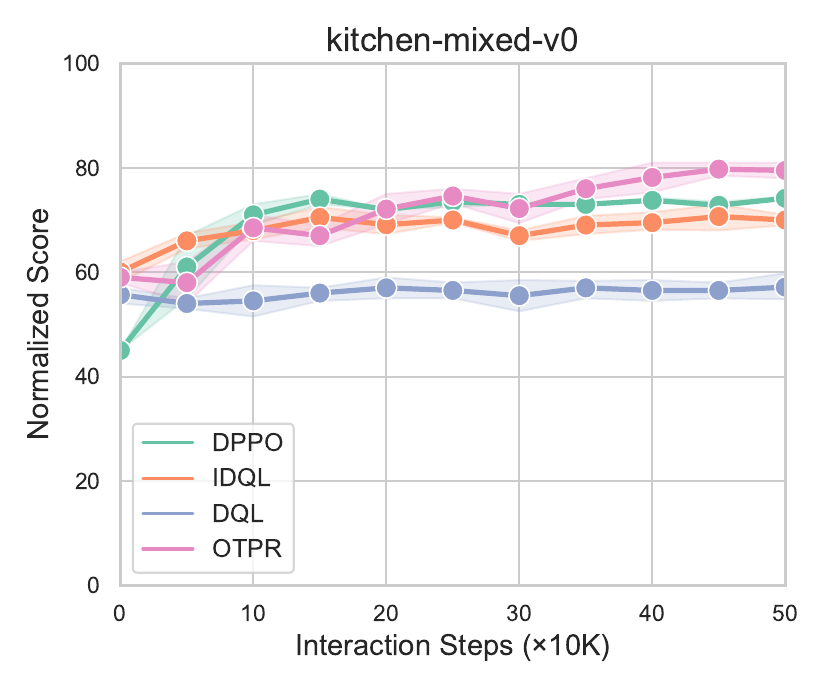}}
		\subfigure{\includegraphics[trim = 4mm 0mm 0mm 0mm,  clip,width=0.31\linewidth ]{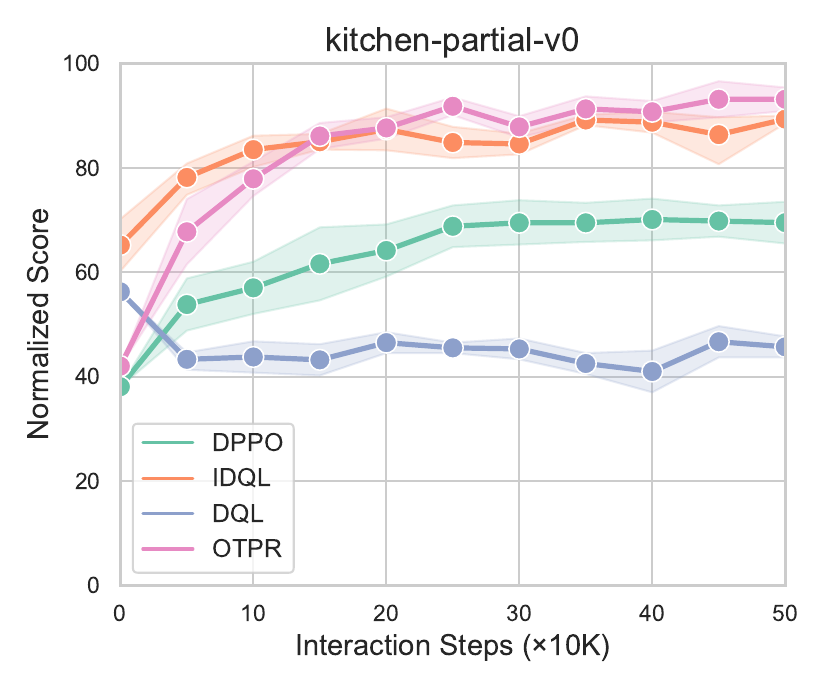}}
		\subfigure{\includegraphics[trim = 4mm 0mm 0mm 0mm,  clip,width=0.31\linewidth ]{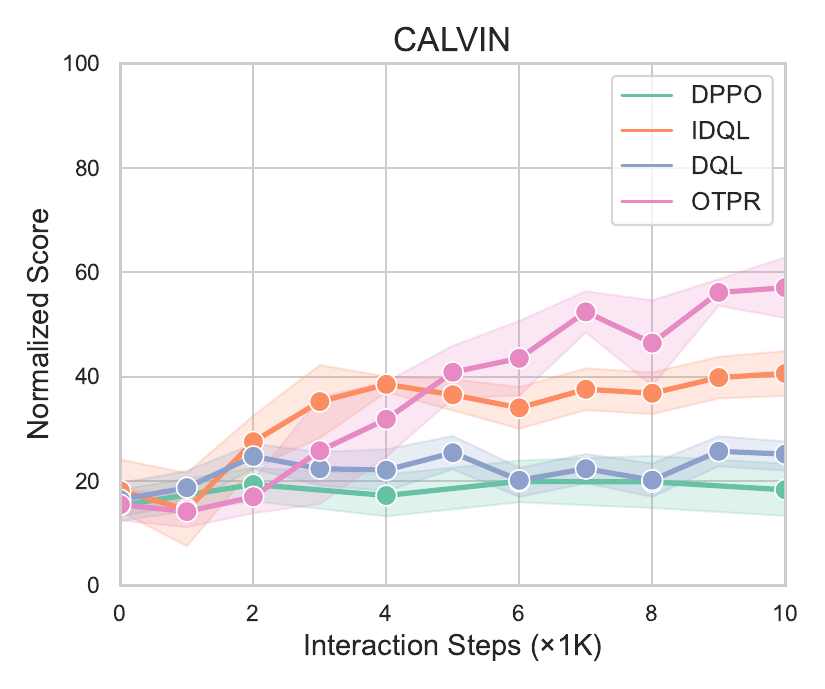}}
		\subfigure{\includegraphics[trim = 4mm 0mm 0mm 0mm,  clip,width=0.31\linewidth ]{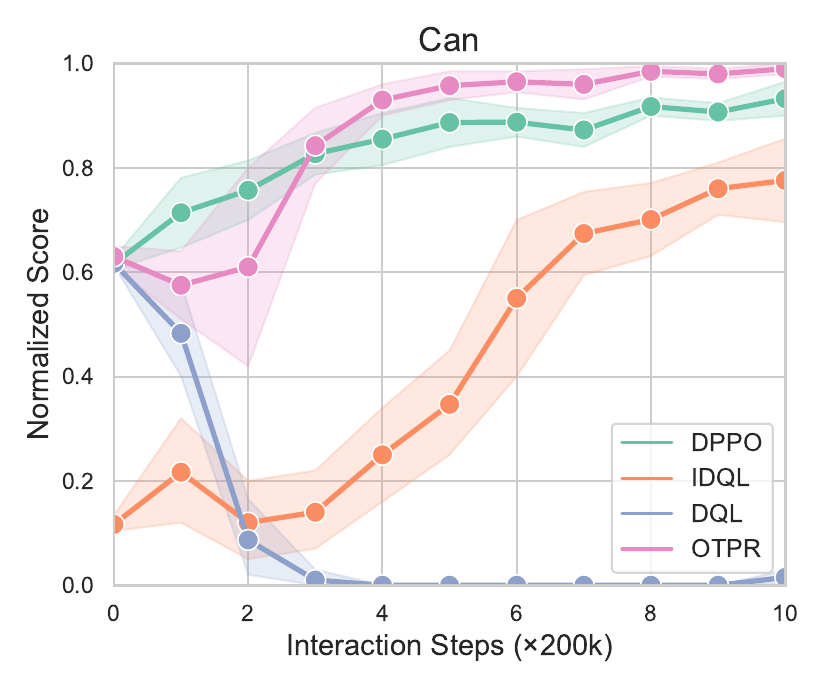}}
		\subfigure{\includegraphics[trim = 4mm 0mm 0mm 0mm,  clip,width=0.31\linewidth ]{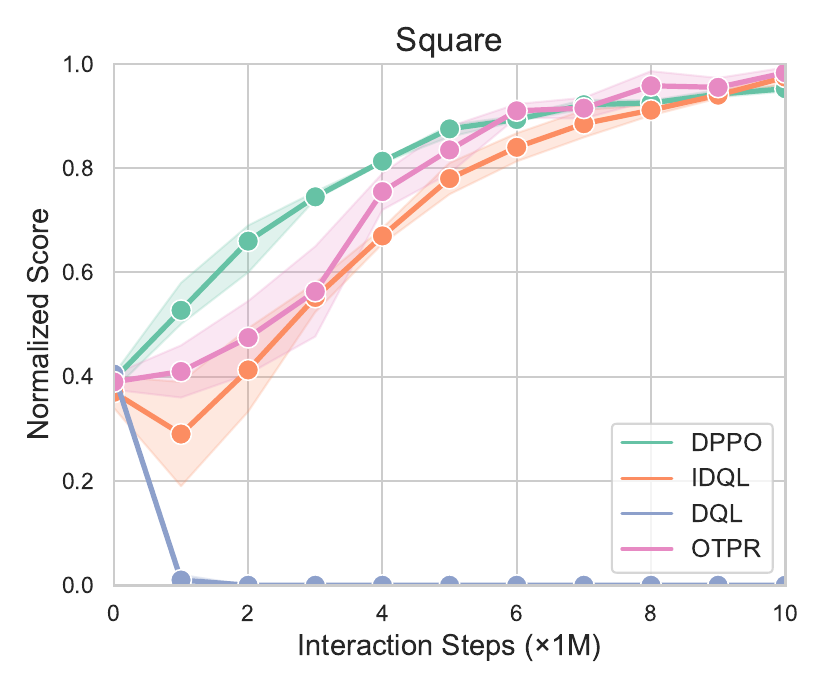}}
		\caption{\textbf{Learning curves of online fine-tuning with various methods.} Observe that OTPR largely always dominates or attains similar performance to the next best method. Other methods for fine-tuning diffusion policies (IDQL, DQL, DPPO) are a bit unstable, and perform substantially worse.}
		\label{fig:comparisons}
	\end{center}
    \vskip -0.3in
\end{figure*}

\section{Experiments}
In this section, we evaluate OTPR and several prior approaches, in a number of benchmark domains that require learning policies from static offline expert data and then fine-tune them with limited online interaction in the MDP (offline-to-online fine-tuning). We also study the hybrid RL problem setting (i.e., online RL with offline data put in the replay buffer) for some experiments. Finally, we perform ablation experiments to understand the utility of different components of OTPR. 

\subsection{Experimental Setup}

\textbf{Environments and tasks.} We study: (1) Robomimic tasks~\cite{mandlekar2021matters}, which is a commonly used benchmark designed to study imitation learning for robot
manipulation.The evaluation score represents the success rate. (2) Franka-Kitchen tasks~\cite{gupta2019relay}, which require solving a sequence of four manipulation tasks in a kitchen environment with a 9-Dof Franka robot; and (3) the CALVIN benchmark~\cite{mees2022calvin}, an evaluation benchmark designed for long-horizon, language-conditioned manipulation, which requires solving a sequence of four manipulation tasks in a tabletop environment. The evaluation score for a trajectory is the maximum number of sub-tasks completed simultaneously at any single point in the trajectory. 
The CALVIN task is significantly challenging, as policies must be learned directly from pixels using offline play data obtained through human teleoperation.

\textbf{Implementation details.} 
We provide a detailed list of hyper-parameters and best practices for running OTPR in Appendix. We instantiate OTPR using the popular IQL with keeping the critic training as is. For the image-based domain, we use a ResNet 18 encoder and store features in the replay buffer to facilitate the estimation of the dual terms.

\begin{table*}[htbp]
  \centering
  \caption{Comparison of OTPR with other demo-augmented RL algorithms. OTPR outperforms every other approach, both in terms of the offline performance (left of $\rightarrow$) and performance after fine-tuning (right of $\rightarrow$). }
  \vskip 0.1in
    \begin{tabular}{cccccc}
    \toprule
    \toprule
    \multirow{2}[2]{*}{} & \multicolumn{3}{c}{Franka-Kitchen} & \multicolumn{2}{c}{RoboMimic} \\
          & Kitchen-Complete-v0 & Kitchen-Mixed-v0 & Kitchen-Partial-v0 & Can-State   & Square-State \\
    \midrule
    RLPD  &   $0 \rightarrow 18$    &   $0 \rightarrow 14$    &  $0 \rightarrow 34$     &  $0 \rightarrow 0$     & $0 \rightarrow 3$ \\
    \quad Cal-QL \quad \ & $19 \rightarrow 57$ & $37 \rightarrow 72$ & $59 \rightarrow 84$ & $ 0 \rightarrow 0 $      & $0 \rightarrow 0$ \\
    IBRL  &  $0 \rightarrow 25$     &   $0 \rightarrow 13$    &  $0 \rightarrow 15$    & $0 \rightarrow 64$    & $0 \rightarrow 50$ \\
    \midrule
    OTPR  &  $61 \rightarrow 92$     & $59 \rightarrow 79$      &   $42 \rightarrow 93$    & $63 \rightarrow 99$   &  $40 \rightarrow 98$\\
    \bottomrule
    \bottomrule
    \end{tabular}%
  \label{tab:exp1RL}%
  \vskip -0.1in
\end{table*}%

\subsection{Results}

\noindent\textbf{Comparisons with Other Online Fine-Tuning Methods.} We conduct a comprehensive comparison of OTPR against a range of reinforcement learning (RL) methods designed for fine-tuning diffusion-based policies. Specifically, we evaluate the following approaches: (1) Implicit Diffusion Q-Learning (IDQL)~\cite{idql}, which extends Implicit Q-Learning (IQL) to incorporate diffusion policies through critic-based re-ranking; (2) Diffusion Policy Optimization (DPPO)~\cite{dppo}, which fine-tunes diffusion policies initially learned via imitation learning by optimizing a two-layer Markov Decision Process (MDP) loss; and (3) Diffusion Q-Learning (DQL)~\cite{wang2022diffusion}, which trains diffusion policies using a reparameterized policy gradient estimator similar to the Soft Actor-Critic (SAC) framework~\cite{haarnoja2018soft}.

Overall, OTPR performs consistently and significantly improves fine-tuning efficiency and asymptotic performance of diffusion policies. Notably, OTPR consistently maintains high normalized scores in the kitchen-complete-v0, CALVIN and Can task, while other methods exhibit relative instability, especially DQL and IDQL, which show considerable fluctuations in performance across different interaction steps. 
This may be attributed to both DQL and IDQL performing off-policy updates and propagating gradients from the imperfect Q function to the actor, which results in even greater training instability in sparse-reward tasks given the continuous action space and large action chunk sizes. In contrast, OTPR can quickly mitigate the adverse effects brought about by this issue by leveraging the guidance of the compatible function.
This analysis suggests that OTPR is a robust and effective approach for online fine-tuning in diffusion policy tasks, consistently outperforming the other methods in terms of stability and overall performance.

\noindent\textbf{Comparisons with demo-augmented RL.} Next, we compare OTPR with recently proposed RL methods for training robot policies (not necessarily diffusion based) leveraging offline data, including RLPD~\cite{ball2023efficient}, Cal-QL~\cite{nakamoto2024cal}, and IBRL~\cite{hu2023imitation}. These methods add expert data in the replay buffer and performs off-policy updates. We evaluate these methods on Franka-Kitchen and RoboMimic environents.
IBRL and Cal-QL are also pretrained with behavior cloning and offline RL objectives, respectively.
All of results are shown on Table~\ref{tab:exp1RL}. In the Franka-Kitchen domains, while Cal-QL demonstrates competitive performance, OTPR shows more impressive score improvements, rising from 61 to 92 in Kitchen-Complete-v0 and from 59 to 79 in Kitchen-Mixed-v0. In contrast, other methods such as RLPD, IQL, and IBRL perform significantly worse, particularly in the Kitchen-Partial-v0 task, where OTPR leads with a final score of 93.
In the RoboMimic environment, OTPR continues to excel, achieving high scores of 99 in Can-State and 98 in Square-State, showcasing its robustness across diverse scenarios. Although IBRL performs the best among the competitors, there remains a significant gap in performance.

\subsection{Ablation Experiments}

\begin{figure}[t]
 \vskip 0.1in
	\begin{center}
    \subfigure{\includegraphics[trim = 2mm 2mm 2mm 2mm,  clip,width=0.485\columnwidth ]{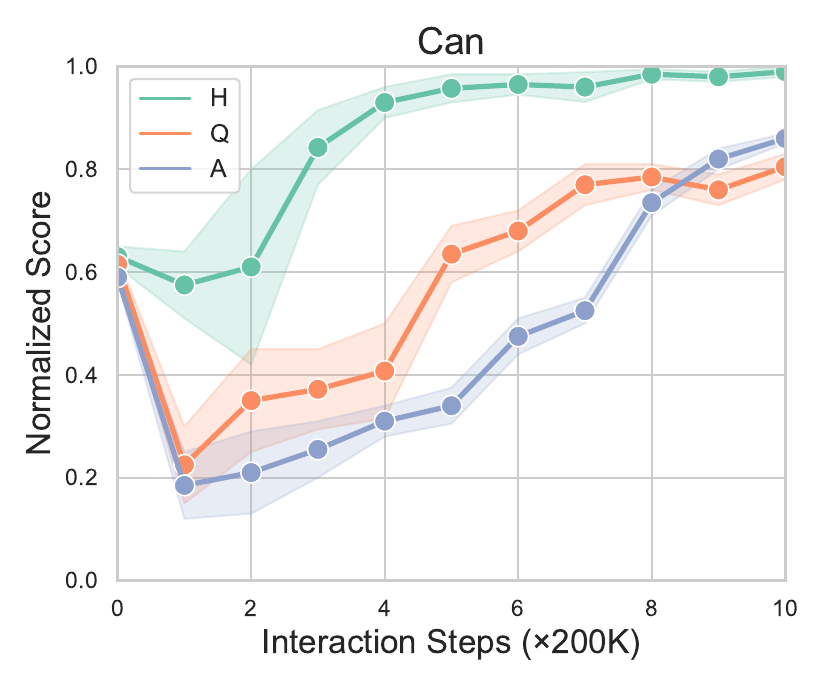}}
		\subfigure{\includegraphics[trim = 2mm 2mm 2mm 2mm,  clip,width=0.485\columnwidth ]{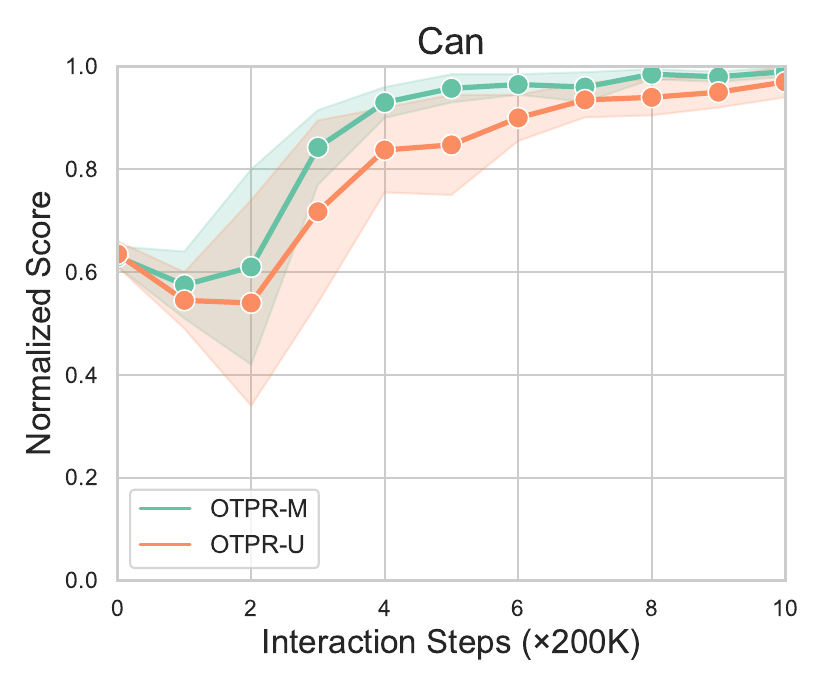}}
		\caption{(left) Comparison between
OTPR with different guidance (H, Q and A). (right) Comparison between
OTPR with (OTPR-M) and without (OTPR-U) the expert data mask. }
		\label{fig:ab2}
	\end{center}
     \vskip -0.2in
\end{figure}

\noindent\textbf{Effect of the compatibility function.} 
In the previous section, we have already demonstrated the advantages of OTPR over other diffusion-based fine-tuning methods that rely on Q-values. Now, to spotlight the pivotal role of our method’s core component—the guidance from the compatibility function $H$, we replace it with $Q$ and advantages $A$ within the same training framework. The experimental results on the Robomimic-Can task are illustrated in Fig.~\ref{fig:ab2}(left).
Clearly, compared to using Q-value and advantage, OT-guided training demonstrates significantly faster convergence and superior evaluation performance.

\noindent\textbf{Effect of the masked OT.} OTPR incorporates masked Optimal Transport (OT) to utilize expert data as keypoints, guiding accurate distribution transport. As depicted in Fig.~\ref{fig:ab2}, OTPR-U, which lacks the mask matrix, exhibits instability and reduced efficiency, despite outperforming other mainstream methods. Notably, even without the mask, OTPR can still operate as a fully functional offline RL algorithm by leveraging the compatibility function without reward. 

\section{Conclusion}
This paper introduced OTPR, a novel method integrating optimal transport theory with diffusion policies to enhance the efficiency and adaptability of reinforcement learning fine-tuning. OTPR leverages the Q-function as a transport cost and uses masked optimal transport to guide state-action matching, improving learning stability and performance. Experiments demonstrated OTPR's superior performance across multiple tasks, especially in complex environments. Future work will focus on scaling OTPR to larger state-action spaces, and exploring its integration with other advanced policy architectures.




\nocite{langley00}

\bibliography{example_paper}
\bibliographystyle{icml2025}

\newpage
\appendix
\onecolumn

\section{Additional Details for Algorithm}
\begin{figure}[t]
\vskip 0.2in
	\begin{center}
		\subfigure{\includegraphics[trim = 0mm 0mm 15mm 0mm,  clip,width=0.485\columnwidth ]{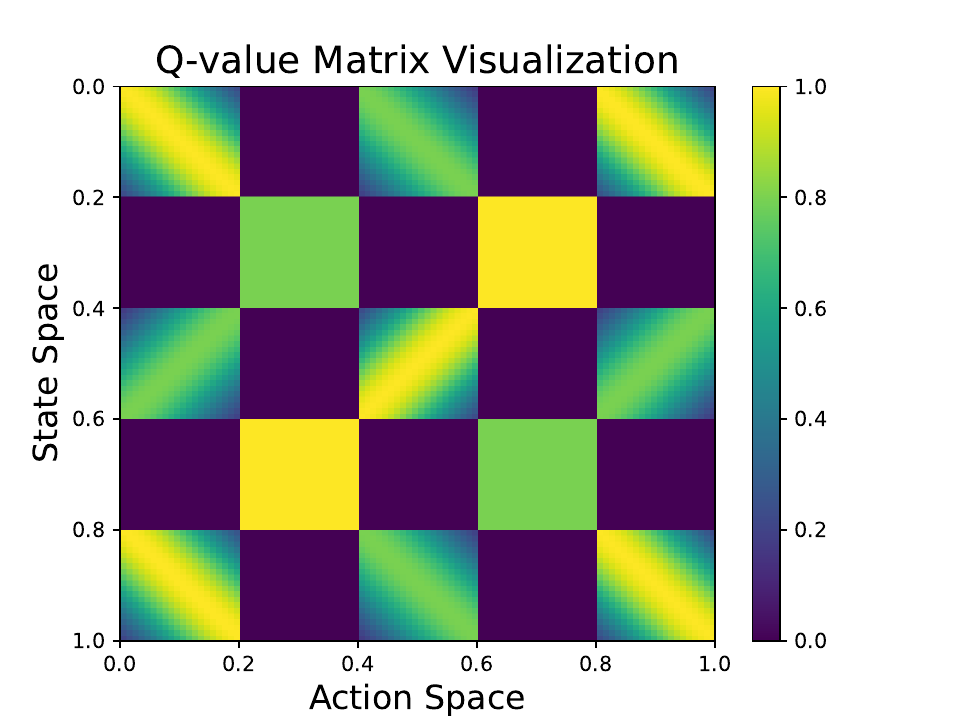}}
		\subfigure{\includegraphics[trim = 5mm 0mm 10mm 0mm,  clip,width=0.485\columnwidth ]{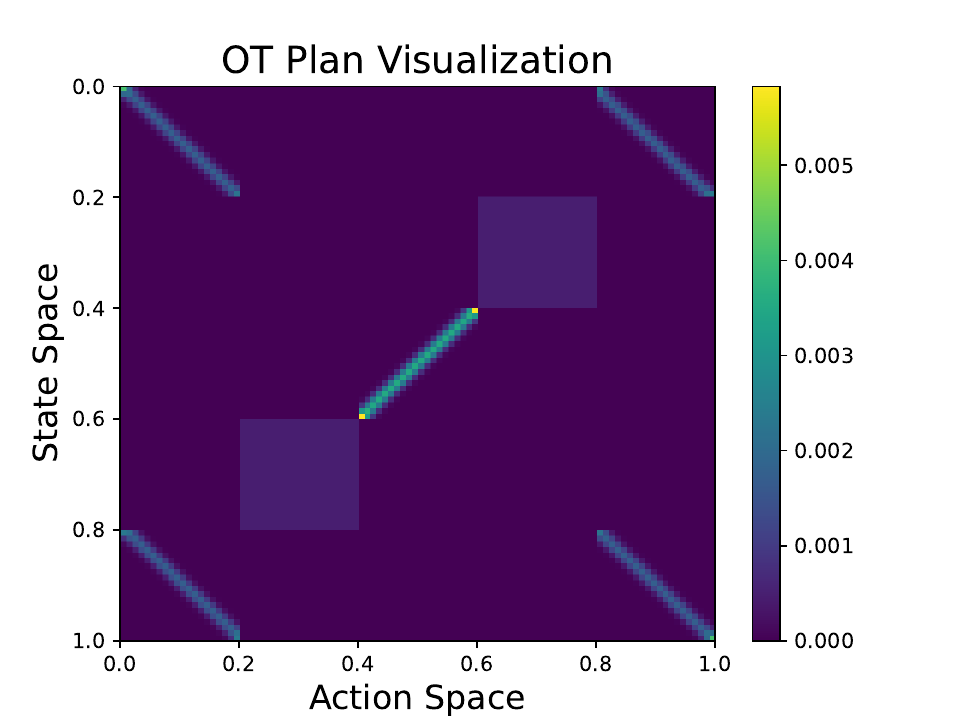}}
		\caption{This example demonstrates a clear and concise visualization of a Q-value matrix alongside its corresponding estimated optimal transport plan. }
		\label{fig:toys}
	\end{center}
    \vskip -0.2in
\end{figure}

\subsection{Pseudo-codes of algorithm for training $u_{\omega}, v_{\omega}$.} \label{app:alOT} 
The pseudo-codes of the algorithm to learn the dual terms $u_{\omega}, v_{\omega}$, a.k.a., potentials, are given in Algorithm~\ref{alg:trainuv} \cite{seguy2018large,gu2023optimal}.

\begin{algorithm}[]
   \caption{Algorithm for estimating potentials $u_{\omega_1}, v_{\omega_2}$}
   \label{alg:trainuv}
\begin{algorithmic}
   \STATE {\bfseries Input:} $Q$-network and replay buffer $\mathcal{B}$, nets $u_{\omega}, v_{\omega}$, batch size $b$, learning rate $\alpha$, expert demonstrations $\mathcal{D}$ (if  available) .
   \STATE {\bfseries Output: }  Learned potential $u_{\omega_1}, v_{\omega_2}$.
   \FOR{iteration = $1, 2, \dots$}
   \STATE Calculate the cost between $\boldsymbol{s}$ and $\boldsymbol{a}$ in $\mathcal{B}$.
   \STATE Sample a state batch $\{s_1, s_2, \dots, s_b\}$ from $\mathcal{B}$.
   \STATE Sample a action batch $\{a_1, a_2, \dots, a_b\}$ from $\mathcal{B}$.
   
   \IF{Expert data $\mathcal{D}$ is available}
   \STATE Update $ \omega_1 \leftarrow \omega_1 + \alpha \sum_{i,j} \nabla_{\omega_1} u_{\omega_1}(s_i) + \partial_u \tilde{F}_\lambda(u_{\omega_1}(s_i), v_{\omega_2}(a_j)) \nabla_{\omega_1} u_{\omega_1}(s_i)$
   \STATE Update $ \omega_2 \leftarrow \omega_2 + \alpha \sum_{i,j} \nabla_{\omega_2} v_{\omega_2}(a_j) + \partial_v \tilde{F}_\lambda(u_{\omega_1}(s_i), v_{\omega_2}(a_j)) \nabla_{\omega_2} v_{\omega_2}(a_m)$
   \ELSE
   \STATE Update $ \omega_1 \leftarrow \omega_1 + \alpha \sum_{i,j} \nabla_{\omega_1} u_{\omega_1}(s_i) + \partial_u F_\lambda(u_{\omega_1}(s_i), v_{\omega_2}(a_j)) \nabla_{\omega_1} u_{\omega_1}(s_i)$
   \STATE Update $ \omega_2 \leftarrow \omega_2 + \alpha \sum_{i,j} \nabla_{\omega_2} v_{\omega_2}(a_j) + \partial_v F_\lambda(u_{\omega_1}(s_i), v_{\omega_2}(a_j)) \nabla_{\omega_2} v_{\omega_2}(a_m)$
   \ENDIF
   \ENDFOR
\end{algorithmic}
\end{algorithm}

\subsection{Training by fitting noise.}~\label{app:TFNoise}
We consider the VE-SDE and the VP-SDE as examples of forward SDEs. In the VE-SDE, $f(\boldsymbol{a}, t) = 0$ and $g(t) = \sqrt{\frac{d [\sigma^{2}(t)]}{dt}}$, where $\sigma > 0$ is an increasing function of t. For the VP-SDE, $f(\boldsymbol{a}, t) = -\frac{1}{2} \beta(t) \boldsymbol{a}$ and $g(t) = \sqrt{\beta(t)}$, with $\beta(t) = \beta_{min} + (\beta_{max} - \beta_{min}) t$. The distribution $p_{t|0}(\boldsymbol{a}_t | \boldsymbol{a}_0)$ for $\boldsymbol{a}_t$ given $\boldsymbol{a}_0$ is defined as: 
\begin{equation}
p_{t|0}(\boldsymbol{a}_t | \boldsymbol{a}_0) =  
\begin{cases}   
\mathcal{N}(\boldsymbol{a}_t | \boldsymbol{a}_0, \sigma^2(t)\mathbf{I}), & \text{for VE-SDE,} \\   
\mathcal{N}(\boldsymbol{a}_t | \boldsymbol{a}_0 e^{\frac{1}{2} h(t)}, (1 - e^{h(t)}) \mathbf{I}), & \text{for VP-SDE,}   
\end{cases}   
\end{equation} 
where $h(t) = -\frac{1}{2} t^2 (\beta_{max} - \beta_{min}) - t \beta_{min}$, and $\mathbf{I}$ is the identity matrix. 

We define the \( \sigma_t \mathbf{I} \) as the standard variation of \( p_{t|0}(\boldsymbol{a}_t|\boldsymbol{a}) \), specifically,  \( \sigma_t^2 = \sigma^2(t) \) for VE-SDE and \( \sigma_t^2 = 1 - e^{h(t)} \) for VP-SDE. Using the reparameterization trick, given sampling\( (\boldsymbol{s},\boldsymbol{a}) \), we have \( \boldsymbol{a}_t = \boldsymbol{a} + \sigma_t \epsilon \) for VE-SDE, and \( \boldsymbol{a}_t = e^{\frac{1}{2} h(t)} \boldsymbol{a} + \sigma_t \epsilon \) for VP-SDE, where \( \epsilon \sim \mathcal{N}(0,\mathbf{I}) \). Further, \( \nabla_{\boldsymbol{a}_t} \log p_{t|0}(\boldsymbol{a}_t|\boldsymbol{a}) = -\frac{1}{\sigma_t} \epsilon \). Therefore, the loss \( \mathcal{J}_{\boldsymbol{s},\boldsymbol{a}} \) for fitting noise can be written as  
\begin{equation}\label{eq:A-5}
\mathcal{J}_{\boldsymbol{s},\boldsymbol{a}} = \mathbb{E}_{t,\epsilon \sim \mathcal{N}(0,I)} \left[ \frac{w_t}{\sigma_t^2} \left\| s_\theta(\nu_t(\boldsymbol{a}) + \sigma_t \epsilon; \boldsymbol{s}, t)\sigma_t+\epsilon \right\|_2^2 \right]. 
\end{equation} 
For VE-SDE, \( \nu_t(\boldsymbol{a}) = \boldsymbol{a} \), while for VP-SDE, \( \nu_t(\boldsymbol{a}) = e^{\frac{1}{2} h(t)} \boldsymbol{a} \). Equation~\ref{eq:A-5} indicates that \( s_\theta(y_t; x, t) \) is trained to match the scaled noise \( -\frac{1}{\sigma_t} \epsilon \).

\section{Proofs}
\subsection{Proof of Proposition~\ref{propOTRL}}\label{app:ProofP1}
\begin{proposition}
Given an optimal behavior policy $\pi^\beta$ and a critic-based cost function $c = -Q^{\beta}$, let $\pi^*$ is the solution to Eq.~\ref{eq:monRL} with the $Q^{\beta}$ cost function. Then it holds that: $\mathcal{J}_{\text{RL}}(\pi^*) = \mathcal{J}_{\text{RL}}(\pi^\beta)$.
\end{proposition}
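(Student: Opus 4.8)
The plan is to show that the optimal transport map $\pi^*$ is forced to select value-maximizing actions, hence it is itself an optimal policy and therefore has the same return as $\pi^\beta$. First I would rewrite both sides of the claimed identity through value functions: taking $\mu$ to be the initial-state distribution (as is implicit in the OT formulation), $\mathcal{J}_{\text{RL}}(\pi) = \mathbb{E}_{\boldsymbol{s}\sim\mu}[V^{\pi}(\boldsymbol{s})]$ for any policy, and for a deterministic map this is $\mathbb{E}_{\boldsymbol{s}\sim\mu}[Q^{\pi}(\boldsymbol{s},\pi(\boldsymbol{s}))]$. Since $\pi^\beta$ is assumed optimal, $Q^{\pi^\beta}=Q^\beta$ coincides with the optimal action-value function $Q^*$, every action in the support of $\pi^\beta(\cdot\mid\boldsymbol{s})$ maximizes $Q^*(\boldsymbol{s},\cdot)$, and consequently $Q^\beta(\boldsymbol{s},\boldsymbol{a})\le V^*(\boldsymbol{s}):=\max_{\boldsymbol{a}'}Q^*(\boldsymbol{s},\boldsymbol{a}')$ pointwise.

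The core step is a sandwich argument on the OT objective. For every feasible transport map $\pi$ (that is, $\pi_\#\mu=\nu$) we have $M(\pi) = -\mathbb{E}_{\boldsymbol{s}\sim\mu}[Q^\beta(\boldsymbol{s},\pi(\boldsymbol{s}))] \ge -\mathbb{E}_{\boldsymbol{s}\sim\mu}[V^*(\boldsymbol{s})]$, and this lower bound does not depend on $\pi$. It is attained by the (Kantorovich plan associated with the) behavior policy $\pi^\beta$, whose transported actions are all $Q^*$-maximizers, so $\mathbb{E}_{(\boldsymbol{s},\boldsymbol{a})\sim\gamma^\beta}[-Q^\beta(\boldsymbol{s},\boldsymbol{a})]=-\mathbb{E}_{\boldsymbol{s}\sim\mu}[V^*(\boldsymbol{s})]$. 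Hence the optimal transport value equals $-\mathbb{E}_{\boldsymbol{s}\sim\mu}[V^*(\boldsymbol{s})]$, and any minimizer $\pi^*$ must meet the bound: $Q^\beta(\boldsymbol{s},\pi^*(\boldsymbol{s}))=V^*(\boldsymbol{s})$ for $\mu$-almost every $\boldsymbol{s}$, i.e. $\pi^*$ picks a value-maximizing action at (almost) every initial state.

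Finally I would promote this to full optimality of $\pi^*$: because $\pi^*$ is greedy with respect to $Q^*$, a standard policy-improvement / Bellman-optimality argument yields $V^{\pi^*}=V^*$, whence $\mathcal{J}_{\text{RL}}(\pi^*)=\mathbb{E}_{\boldsymbol{s}\sim\mu}[V^*(\boldsymbol{s})]=\mathcal{J}_{\text{RL}}(\pi^\beta)$.

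The main obstacle is the passage from ``$\mu$-almost-everywhere value-maximizing'' to ``value-maximizing along every trajectory generated by $\pi^*$'', which the greedy-implies-optimal step implicitly needs: one either assumes $\mu$ has full support on the reachable state set, or argues that the constraint $\pi^*_\#\mu=\nu$ together with the deterministic dynamics keeps the state occupancy of $\pi^*$ inside $\mathrm{supp}(\mu)$. A secondary technical point is that the transport infimum must actually be attained by a deterministic map with marginal $\nu$ — immediate when $\pi^\beta$ is itself deterministic, and otherwise needing enough ties among optimal actions to realize $\nu$ as a pushforward of $\mu$; I would make the determinism (or full-support) hypothesis explicit rather than belabor this point.
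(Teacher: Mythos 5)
Your proof is correct and reaches the same conclusion, but the route differs from the paper's in both halves of the argument. The paper first asserts directly that the Monge constraint $\pi_\#\mu=\nu$ confines $\pi^*(\boldsymbol{s})$ to $\mathrm{Supp}(\nu)$ and that the minimizer is therefore $\arg\max_{\boldsymbol{a}\in\mathrm{Supp}(\pi^\beta(\cdot|\boldsymbol{s}))}Q^\beta(\boldsymbol{s},\boldsymbol{a})$, and then invokes the performance difference lemma of Kakade--Langford, $J(\pi^*)-J(\pi^\beta)=\tfrac{1}{1-\kappa}\,\mathbb{E}\bigl[Q^\beta(\boldsymbol{s},\pi^*(\boldsymbol{s}))-V^\beta(\boldsymbol{s})\bigr]$, concluding the advantage vanishes because $V^\beta=\max_{\boldsymbol{a}}Q^\beta$. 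You instead \emph{derive} the greediness of $\pi^*$ from a sandwich argument — every feasible map satisfies $M(\pi)\ge-\mathbb{E}_\mu[V^*]$, the bound is attained by the coupling $\mu\otimes\pi^\beta$, hence any minimizer must satisfy $Q^\beta(\boldsymbol{s},\pi^*(\boldsymbol{s}))=V^*(\boldsymbol{s})$ $\mu$-a.e. — and then close with the Bellman-optimality fact that a $Q^*$-greedy policy is optimal. Your version buys two things: the attainment step makes explicit why the Monge infimum equals $-\mathbb{E}_\mu[V^*]$ (which the paper only gestures at, and which genuinely requires either a deterministic $\pi^\beta$ or enough ties for a feasible map to exist — a hypothesis you rightly say should be made explicit); and your stated ``main obstacle'' — upgrading $\mu$-a.e.\ greediness to greediness on the states $\pi^*$ actually visits — is precisely the gap the paper papers over by writing the performance-difference expectation under $\mu$ rather than under the discounted occupancy of $\pi^*$. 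The paper's route is shorter once the greediness of $\pi^*$ is granted; yours is more self-contained and more honest about where support and attainment assumptions are needed.
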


\begin{proof}
We use $\text{Supp}(\mu)$ and $\text{Supp}(\nu)$ to refer to the support of $\mu$ and  $\nu$, two subsets of $\mathcal{S}$ and $\mathcal{A}$, respectively, which are also the set of values which $\boldsymbol{s} \sim \mu$ and $\boldsymbol{a} \sim \nu$ can take. Given a point $s \in \text{Supp}(\mu)$, the Monge problem would send the whole mass at $x$ to a unique location $a \in \text{Supp}(\nu)$.
The a primal state-conditioned Monge OT problem discripted with 
\ref{eq:monRL} can be formulated as:
\begin{equation} 
    \inf_\pi \mathbb{E}_{\boldsymbol{s}\sim\mu} \left[-Q^\beta(\boldsymbol{s}, \mathcal{\pi}(\boldsymbol{s}))\right], \text{subject to}  \ \pi(s) \subset \text{Supp}(\nu) \ \text{for all} \ s \in \text{Supp}(\mu).
\end{equation}  

According to \cite{kakade2002approximately}, we can use the performance difference lemma to compare the performance of the two policies $\pi^*$ and $\pi^\beta$ :
\begin{align}
    J(\pi^*) - J(\pi^\beta) &= \frac{1}{1-\kappa} \mathbb{E}_{\boldsymbol{s}\sim \mu}\left[A^\beta(\boldsymbol{s}, \pi^*)\right] \\
    & = \frac{1}{1-\kappa} \mathbb{E}_{\boldsymbol{s}\sim \mu}\left[Q^\beta(\boldsymbol{s}, \pi^*(\boldsymbol{s}))-V^\beta(\boldsymbol{s})\right] \\
    &= \frac{1}{1-\kappa} \mathbb{E}_{\boldsymbol{s} \sim \mu} \left[Q^\beta(\boldsymbol{s}, \max_{\boldsymbol{a} \subset \text{Supp}(\beta(\cdot|\boldsymbol{s}))}[Q^\beta(\boldsymbol{s}, \boldsymbol{a})]) - V^\beta(\boldsymbol{s})\right]
\end{align}
In the setting of Proposition\label{app:ProofP1}, $\pi^\beta$ is an optimal expert policy, $V^\beta(\boldsymbol{s}) = \max_{\boldsymbol{a}}Q^\beta(\boldsymbol{s}, \boldsymbol{a})$, thus $\text{Supp}(\pi^\beta(\cdot|s))$ indicates the optimal actions $\boldsymbol{a}^*$ from $\pi^\beta$ which maximize $Q^\beta$. Then we have: $J(\pi^*) - J(\pi^\beta) = 0$.
\end{proof} 

\subsection{Proof of Proposition~\ref{prop2}}\label{app:ProofP2}
\begin{proposition}
Let \( \mathcal{C}(\boldsymbol{s}, \boldsymbol{a}) = \frac{1}{\mu(\boldsymbol{s})} \delta(\boldsymbol{s} - \boldsymbol{s}_{\text{cond}}(\boldsymbol{a})) \) where \( \delta \) is the Dirac delta function, then \( \mathcal{J}_{\text{DSM}}(\theta) \) in Eq.~\ref{eq:DSM} can be reformulated as  
\begin{align}\label{eq:proofP4.2-1}
\mathcal{J}_{\text{CDSM}}(\theta) = & \mathbb{E}_{t} w_{t} \mathbb{E}_{\boldsymbol{s} \sim \mu} \mathbb{E}_{\boldsymbol{a} \sim \nu} \mathcal{C}(\boldsymbol{s},\boldsymbol{a}) \mathbb{E}_{\boldsymbol{a}_{t} \sim \nu_{t}|0}(\boldsymbol{a}_{t}|\boldsymbol{a}) \notag \\
&\left\| s_{\theta}(\boldsymbol{a}_{t}; \boldsymbol{s}, t) - \nabla_{\boldsymbol{a}_{t}} \log \nu_{t|0}(\boldsymbol{a}_{t}|\boldsymbol{a}) \right\|^{2}_{2}. 
\end{align}
Furthermore, 
\begin{equation}\label{eq:proofP4.2-2}
\upsilon(\boldsymbol{s},\boldsymbol{a}) = \mathcal{C}(\boldsymbol{s},\boldsymbol{a}) \mu(\boldsymbol{s}) \nu(\boldsymbol{a}) 
\end{equation}
is a joint distribution for marginal distributions \( \mu \) and \( \nu \).  
\end{proposition}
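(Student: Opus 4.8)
The plan is to prove the two claims separately, since they are essentially independent bookkeeping arguments: (i) that substituting $\mathcal{C}(\boldsymbol{s},\boldsymbol{a}) = \frac{1}{\mu(\boldsymbol{s})}\delta(\boldsymbol{s}-\boldsymbol{s}_{\text{cond}}(\boldsymbol{a}))$ into the triple expectation $\mathcal{J}_{\text{CDSM}}(\theta)$ collapses the integral over $\boldsymbol{s}$ and recovers exactly the denoising score-matching loss $\mathcal{J}_{\text{DSM}}(\theta)$ of Eq.~\ref{eq:DSM}, and (ii) that $\upsilon(\boldsymbol{s},\boldsymbol{a}) = \mathcal{C}(\boldsymbol{s},\boldsymbol{a})\mu(\boldsymbol{s})\nu(\boldsymbol{a})$ is a genuine coupling of $\mu$ and $\nu$, i.e. it is nonnegative, integrates to $1$, and has the correct marginals.

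For part (i), I would write out $\mathcal{J}_{\text{CDSM}}(\theta)$ with the expectations expanded as integrals against $\mu(\boldsymbol{s})\,\mathrm{d}\boldsymbol{s}$ and $\nu(\boldsymbol{a})\,\mathrm{d}\boldsymbol{a}$. The factor $\mathcal{C}(\boldsymbol{s},\boldsymbol{a})\mu(\boldsymbol{s}) = \delta(\boldsymbol{s}-\boldsymbol{s}_{\text{cond}}(\boldsymbol{a}))$, so the $\mu(\boldsymbol{s})$ in the measure cancels the $1/\mu(\boldsymbol{s})$ in $\mathcal{C}$, leaving the Dirac delta; performing the $\boldsymbol{s}$-integral against $\delta(\boldsymbol{s}-\boldsymbol{s}_{\text{cond}}(\boldsymbol{a}))$ simply evaluates the remaining integrand at $\boldsymbol{s} = \boldsymbol{s}_{\text{cond}}(\boldsymbol{a})$. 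What is left is $\mathbb{E}_t w_t \,\mathbb{E}_{\boldsymbol{a}\sim\nu}\,\mathbb{E}_{\boldsymbol{a}_t\sim\nu_{t|0}(\boldsymbol{a}_t|\boldsymbol{a})}\,\|s_\theta(\boldsymbol{a}_t;\boldsymbol{s}_{\text{cond}}(\boldsymbol{a}),t) - \nabla_{\boldsymbol{a}_t}\log\nu_{t|0}(\boldsymbol{a}_t|\boldsymbol{a})\|_2^2$, which is precisely $\mathcal{J}_{\text{DSM}}(\theta)$ with $\boldsymbol{a}_0$ renamed $\boldsymbol{a}$ and the condition written explicitly as $\boldsymbol{s}_{\text{cond}}(\boldsymbol{a}_0)$. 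Here I would use the defining relation $\mu(\boldsymbol{s}) = \sum_{\{\boldsymbol{a}:\boldsymbol{s}_{\text{cond}}(\boldsymbol{a})=\boldsymbol{s}\}}\nu(\boldsymbol{a})$ (the push-forward of $\nu$ by $\boldsymbol{s}_{\text{cond}}$) to make sense of $1/\mu(\boldsymbol{s})$ on the support.

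For part (ii), nonnegativity is immediate since $\mathcal{C}$, $\mu$, $\nu \geq 0$. For the total mass and the marginals, I would integrate $\upsilon(\boldsymbol{s},\boldsymbol{a}) = \delta(\boldsymbol{s}-\boldsymbol{s}_{\text{cond}}(\boldsymbol{a}))\nu(\boldsymbol{a})$ over $\boldsymbol{s}$ first: this gives $\int \upsilon(\boldsymbol{s},\boldsymbol{a})\,\mathrm{d}\boldsymbol{s} = \nu(\boldsymbol{a})$, so the $\boldsymbol{a}$-marginal is $\nu$ and the total mass is $1$. For the $\boldsymbol{s}$-marginal, integrating over $\boldsymbol{a}$ gives $\int \delta(\boldsymbol{s}-\boldsymbol{s}_{\text{cond}}(\boldsymbol{a}))\nu(\boldsymbol{a})\,\mathrm{d}\boldsymbol{a} = \sum_{\{\boldsymbol{a}:\boldsymbol{s}_{\text{cond}}(\boldsymbol{a})=\boldsymbol{s}\}}\nu(\boldsymbol{a}) = \mu(\boldsymbol{s})$, again by the definition of $\mu$ as the push-forward of $\nu$ through $\boldsymbol{s}_{\text{cond}}$. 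Hence $\upsilon \in \Gamma(\mu,\nu)$.

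The main obstacle — really the only subtlety — is the rigorous handling of the Dirac delta together with the $1/\mu(\boldsymbol{s})$ singularity: strictly speaking $\boldsymbol{s}_{\text{cond}}$ is a deterministic map and the dataset is discrete, so I would either treat everything in the discrete setting (where $\delta$ becomes a Kronecker delta, sums replace integrals, and $1/\mu(\boldsymbol{s})$ is well-defined on $\text{Supp}(\mu)$), or phrase the continuous version as a disintegration of $\nu$ along the fibers of $\boldsymbol{s}_{\text{cond}}$. In either formulation the cancellation $\mathcal{C}(\boldsymbol{s},\boldsymbol{a})\mu(\boldsymbol{s}) = \delta(\boldsymbol{s}-\boldsymbol{s}_{\text{cond}}(\boldsymbol{a}))$ is the crux, and the rest is routine substitution; I would note explicitly that the manipulation is valid only on $\text{Supp}(\mu)$, which carries full mass.
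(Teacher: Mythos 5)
Your proposal is correct and follows essentially the same route as the paper's proof: cancel $\mu(\boldsymbol{s})$ against the $1/\mu(\boldsymbol{s})$ in $\mathcal{C}$, integrate out the Dirac delta to evaluate the condition at $\boldsymbol{s}_{\text{cond}}(\boldsymbol{a})$, and verify the marginals of $\upsilon$ using the push-forward identity $\mu(\boldsymbol{s}) = \sum_{\{\boldsymbol{a}:\boldsymbol{s}_{\text{cond}}(\boldsymbol{a})=\boldsymbol{s}\}}\nu(\boldsymbol{a})$ (the paper writes this via $\delta(\boldsymbol{s}-\boldsymbol{s}_{\text{cond}}(\boldsymbol{a})) = \sum_{\boldsymbol{a}'}\delta(\boldsymbol{a}-\boldsymbol{a}')$, which is the same computation). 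Your added remarks on nonnegativity, total mass, and the discrete/disintegration interpretation of the delta are sensible refinements the paper leaves implicit.
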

\begin{proof}
We first prove Eq.~\ref{eq:proofP4.2-1}, and then demonstrate that \(\upsilon(\boldsymbol{s}, \boldsymbol{a})\) serves as a joint distribution for the marginal distributions \(\mu\) and \(\nu\).
(1) The right side of Eq.~\ref{eq:proofP4.2-1} is  

\begin{align}
    &\mathbb{E}_t w_t \mathbb{E}_{\boldsymbol{s} \sim \mu} \mathbb{E}_{\boldsymbol{a} \sim \nu} \mathcal{C}(\boldsymbol{s}, \boldsymbol{a}) \mathbb{E}_{\boldsymbol{a}_t \sim \nu_{t|0}(\boldsymbol{a}_t | \boldsymbol{a})} \| s_{\theta}(\boldsymbol{a}_t; \boldsymbol{s}, t) - \nabla_{\boldsymbol{a}_t} \log \nu_{t|0}(\boldsymbol{a}_t | \boldsymbol{a}) \|^2_2  \\
    =&\mathbb{E}_t w_t \mathbb{E}_{\boldsymbol{a} \sim \nu} \int \mu(\boldsymbol{s}) \mathcal{C}(\boldsymbol{s}, \boldsymbol{a}) \mathbb{E}_{\boldsymbol{a}_t \sim \nu_{t|0}(\boldsymbol{a}_t | \boldsymbol{a})} \| s_{\theta}(\boldsymbol{a}_t; \boldsymbol{s}, t) - \nabla_{\boldsymbol{a}_t} \log \nu_{t|0}(\boldsymbol{a}_t | \boldsymbol{a}) \|^2_2 d\boldsymbol{s}  \\
    =&\mathbb{E}_t w_t \mathbb{E}_{\boldsymbol{a} \sim \nu} \int \delta(\boldsymbol{s}-\boldsymbol{s}_{\text{cond}}(\boldsymbol{a})) \mathbb{E}_{\boldsymbol{a}_t \sim \nu_{t|0}(\boldsymbol{a}_t | \boldsymbol{a})} \| s_{\theta}(\boldsymbol{a}_t; \boldsymbol{s}, t) - \nabla_{\boldsymbol{a}_t} \log \nu_{t|0}(\boldsymbol{a}_t | \boldsymbol{a}) \|^2_2 d\boldsymbol{s}  \\
    =&\mathbb{E}_t w_t \mathbb{E}_{\boldsymbol{a} \sim \nu} \mathbb{E}_{\boldsymbol{a}_t \sim \mu_{t | 0} (\boldsymbol{a}_t | \boldsymbol{a})} \left\| s_{\theta}(\boldsymbol{a}_t; \boldsymbol{s}_{\text{cond}}(\boldsymbol{a}), t) - \nabla_{\boldsymbol{a}_t} \log \nu_{t|0}(\boldsymbol{a}_t | \boldsymbol{a}) \right\|^2_2, 
\end{align}  
which is the definition of $\mathcal{J}_{\text{DSM}}(\theta)$ in Eq.~\ref{eq:DSM}.

(2) We demonstrate that the marginal distributions of \(\upsilon(\boldsymbol{s},\boldsymbol{a})\) are \(\mu\) and \(\nu\) as follows. Firstly,
\begin{equation}
\int \upsilon(\boldsymbol{s},\boldsymbol{a}) \, d\boldsymbol{s} = \int \delta(\boldsymbol{s} - \boldsymbol{s}_{\text{cond}}(\boldsymbol{a})) \nu(\boldsymbol{a}) \, d\boldsymbol{s} = \nu(\boldsymbol{a}) \int \delta(\boldsymbol{s} - \boldsymbol{s}_{\text{cond}}(\boldsymbol{a})) \, d\boldsymbol{s} = \nu(\boldsymbol{a}) 
\end{equation}

Next, from the definition of \(\delta(\cdot)\), we obtain \(\delta(\boldsymbol{s} - \boldsymbol{s}_{\text{cond}}(\boldsymbol{a})) = \sum_{\boldsymbol{a}' : \boldsymbol{s}_{\text{cond}}(\boldsymbol{a}')=\boldsymbol{s}} \delta(\boldsymbol{a} - \boldsymbol{a}')\). Then, we have
\begin{align}
    \int \upsilon(\boldsymbol{s},\boldsymbol{a}) \, d\boldsymbol{a} &= \int \delta(\boldsymbol{s} - \boldsymbol{s}_{\text{cond}}(\boldsymbol{a})) \nu(\boldsymbol{a}) \, d\boldsymbol{a} \\
    &= \int \sum_{\{\boldsymbol{a}' : \boldsymbol{s}_{\text{cond}}(\boldsymbol{a}')=\boldsymbol{s}\}} \delta(\boldsymbol{a}' - \boldsymbol{a}) \nu(\boldsymbol{a}) \, d\boldsymbol{a} \\
    &= \sum_{\{\boldsymbol{a}' : \boldsymbol{s}_{\text{cond}}(\boldsymbol{a}')=\boldsymbol{s}\}} \int \delta(\boldsymbol{a}' - \boldsymbol{a}) \nu(\boldsymbol{a}) \, d\boldsymbol{a} \\
    &= \sum_{\{\boldsymbol{a}' : \boldsymbol{s}_{\text{cond}}(\boldsymbol{a}')=\boldsymbol{s}\}} \nu(\boldsymbol{a}') \\
    &= \mu(\boldsymbol{s}) 
\end{align}
\end{proof}

\subsection{Proof of Theorem~\ref{theorem1}}\label{app:ProofT1}
\begin{theorem}
  For \( \boldsymbol{s} \sim \mu \), consider the forward SDE   
\( \mathrm{d}\boldsymbol{a}_t = f(\boldsymbol{a}_t, t) \mathrm{d}t + g(t) \mathrm{d}\mathbf{w} \) with \( \boldsymbol{a}_0 \sim \hat{\gamma}(\cdot | \boldsymbol{s}) \) and \( t \in [0, T] \). Let \( \nu_t(\boldsymbol{a}_t|\boldsymbol{s}) \) be the distribution of \( \boldsymbol{a}_t \) and   
\(\mathcal{J}_{\text{CSM}}(\theta) = \mathbb{E}_t w_t \mathbb{E}_{\boldsymbol{s} \sim \mu} \mathbb{E}_{\boldsymbol{a}_t \sim \nu_t(\boldsymbol{a}_t|\boldsymbol{s})} \| s_\theta(\boldsymbol{a}_t; \boldsymbol{s}, t) - \nabla_{\boldsymbol{a}_t} \log \nu_t(\boldsymbol{a}_t|\boldsymbol{s}) \|^2_2,  \)
then we have  \(\nabla_\theta \mathcal{J}_{\text{HDSM}}(\theta) = \nabla_\theta \mathcal{J}_{\text{CSM}}(\theta).  \)
\end{theorem}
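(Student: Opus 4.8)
The plan is to prove this as the state-conditioned, $H$-reweighted analogue of Vincent's classical denoising-score-matching identity. Fix the time $t$ and the conditioning state $\boldsymbol{s}$: the outer operators $\mathbb{E}_t w_t$ and $\mathbb{E}_{\boldsymbol{s}\sim\mu}$ are common to $\mathcal{J}_{\text{HDSM}}$ and $\mathcal{J}_{\text{CSM}}$ and do not involve $\theta$, so it suffices to show that the two inner integrands (over $\boldsymbol{a}_t$) agree up to an additive term independent of $\theta$. First I would record that $\hat{\gamma}(\cdot\mid\boldsymbol{s})$ with $\hat{\gamma}(\boldsymbol{a}\mid\boldsymbol{s})=H(\boldsymbol{s},\boldsymbol{a})\nu(\boldsymbol{a})$ is a genuine conditional density, since the marginal constraint of the OT plan gives $\int H(\boldsymbol{s},\boldsymbol{a})\nu(\boldsymbol{a})\,\mathrm{d}\boldsymbol{a}=1$, and hence by definition $\nu_t(\boldsymbol{a}_t\mid\boldsymbol{s})=\int \nu_{t|0}(\boldsymbol{a}_t\mid\boldsymbol{a})\,H(\boldsymbol{s},\boldsymbol{a})\nu(\boldsymbol{a})\,\mathrm{d}\boldsymbol{a}$.

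Next I would expand the squared norm inside $\mathcal{J}_{\text{HDSM}}$ as $\|s_\theta(\boldsymbol{a}_t;\boldsymbol{s},t)\|_2^2 - 2\langle s_\theta(\boldsymbol{a}_t;\boldsymbol{s},t),\nabla_{\boldsymbol{a}_t}\log\nu_{t|0}(\boldsymbol{a}_t\mid\boldsymbol{a})\rangle + \|\nabla_{\boldsymbol{a}_t}\log\nu_{t|0}(\boldsymbol{a}_t\mid\boldsymbol{a})\|_2^2$; the last term integrates to a constant independent of $\theta$ and is discarded under $\nabla_\theta$. For the quadratic term, pushing the $\boldsymbol{a}$-integral through $\|s_\theta\|_2^2$ and using $\int H(\boldsymbol{s},\boldsymbol{a})\nu(\boldsymbol{a})\nu_{t|0}(\boldsymbol{a}_t\mid\boldsymbol{a})\,\mathrm{d}\boldsymbol{a}=\nu_t(\boldsymbol{a}_t\mid\boldsymbol{s})$ reproduces exactly the quadratic term of $\mathcal{J}_{\text{CSM}}$. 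For the cross term the key step is the pointwise identity $\nu_{t|0}(\boldsymbol{a}_t\mid\boldsymbol{a})\,\nabla_{\boldsymbol{a}_t}\log\nu_{t|0}(\boldsymbol{a}_t\mid\boldsymbol{a})=\nabla_{\boldsymbol{a}_t}\nu_{t|0}(\boldsymbol{a}_t\mid\boldsymbol{a})$; inserting it, then swapping $\nabla_{\boldsymbol{a}_t}$ with the $\boldsymbol{a}$-integral gives $\int H(\boldsymbol{s},\boldsymbol{a})\nu(\boldsymbol{a})\,\nabla_{\boldsymbol{a}_t}\nu_{t|0}(\boldsymbol{a}_t\mid\boldsymbol{a})\,\mathrm{d}\boldsymbol{a}=\nabla_{\boldsymbol{a}_t}\nu_t(\boldsymbol{a}_t\mid\boldsymbol{s})=\nu_t(\boldsymbol{a}_t\mid\boldsymbol{s})\,\nabla_{\boldsymbol{a}_t}\log\nu_t(\boldsymbol{a}_t\mid\boldsymbol{s})$, so the HDSM cross term becomes $-2\int \nu_t(\boldsymbol{a}_t\mid\boldsymbol{s})\langle s_\theta(\boldsymbol{a}_t;\boldsymbol{s},t),\nabla_{\boldsymbol{a}_t}\log\nu_t(\boldsymbol{a}_t\mid\boldsymbol{s})\rangle\,\mathrm{d}\boldsymbol{a}_t$, matching $\mathcal{J}_{\text{CSM}}$.

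Combining the three pieces, $\mathcal{J}_{\text{HDSM}}(\theta)$ equals $\mathcal{J}_{\text{CSM}}(\theta)$ plus the (different, but $\theta$-free) difference of the two discarded square-norm constants, after restoring $\mathbb{E}_t w_t\mathbb{E}_{\boldsymbol{s}\sim\mu}$; applying $\nabla_\theta$ then kills the constant and yields $\nabla_\theta\mathcal{J}_{\text{HDSM}}(\theta)=\nabla_\theta\mathcal{J}_{\text{CSM}}(\theta)$.

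I expect the main obstacle to be the analytic justification of the two interchanges: swapping $\nabla_{\boldsymbol{a}_t}$ with the integral over $\boldsymbol{a}$ in the cross-term step, and swapping $\nabla_\theta$ with the expectations. For the forward SDEs considered here (VE/VP), the transition kernel $\nu_{t|0}(\boldsymbol{a}_t\mid\boldsymbol{a})$ is Gaussian for $t>0$, hence smooth with Gaussian tails, which supplies the domination needed for differentiation under the integral sign; together with the mild regularity of $s_\theta$ (continuously differentiable in $\theta$ with locally integrable gradient) this legitimizes both swaps. The remaining work is purely bookkeeping — verifying that every dropped term is genuinely $\theta$-independent and that the reweighting by $H$, which lets $\hat{\gamma}(\cdot\mid\boldsymbol{s})$ differ from $\nu$, is applied consistently in the quadratic and cross terms, which is exactly where the OT marginal/normalization property is invoked.
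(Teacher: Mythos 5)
Your proposal is correct and follows essentially the same route as the paper: both reinterpret the $H$-weighted expectation over $\nu$ as an expectation over the conditional plan $\hat{\gamma}(\cdot\mid\boldsymbol{s})$ and then reduce the claim to the statement that denoising score matching over the mixture $\nu_t(\cdot\mid\boldsymbol{s})$ differs from explicit conditional score matching only by a $\theta$-independent constant $C_{\boldsymbol{s},t}$. The only difference is that the paper invokes Vincent's denoising-score-matching identity as a cited black box, whereas you prove it inline by expanding the square and using $\nu_{t|0}\nabla\log\nu_{t|0}=\nabla\nu_{t|0}$ — a more self-contained but equivalent argument.
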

\begin{proof}
To establish the equivalence between $\mathcal{J}_{\text{HDSM}}(\theta)$ and $\mathcal{J}_{\text{CSM}}(\theta)$, we start by examining the difference between the two objective functions:
\begin{align}
    \mathcal{J}_{\text{HDSM}}(\theta) - \mathcal{J}_{\text{CSM}}(\theta) &= \mathbb{E}{w_t} \mathbb{E}_{\boldsymbol{s} \sim \mu} \mathbb{E}_{\boldsymbol{a}_0 \sim \nu} H(\boldsymbol{s}, \boldsymbol{a}_0) \mathbb{E}_{\boldsymbol{a}_t \sim \nu_{t | 0}(\boldsymbol{a}_t | \boldsymbol{a}_0)} \| s_\theta(\boldsymbol{a}_t; \boldsymbol{s}, t) - \nabla_{\boldsymbol{a}_t} \log \nu_{t | 0}(\boldsymbol{a}_t | \boldsymbol{a}_0) \|^2_2 - \mathcal{J}_{\text{CSM}}(\theta) \notag \\
    &= \mathbb{E}{w_t} \mathbb{E}_{\boldsymbol{s} \sim \mu} \mathbb{E}_{\boldsymbol{a}_0 \sim \hat{\gamma}(\boldsymbol{a}_0 | \boldsymbol{s})} \mathbb{E}_{\boldsymbol{a}_t \sim \nu_{t | 0}(\boldsymbol{a}_t | \boldsymbol{a}_0)} \| s_\theta(\boldsymbol{a}_t; \boldsymbol{s}, t) - \nabla_{\boldsymbol{a}_t} \log \nu_{t | 0}(\boldsymbol{a}_t | \boldsymbol{a}_0) \|^2_2 - \mathcal{J}_{\text{CSM}}(\theta)
\end{align}
Since \( \boldsymbol{s} \to \boldsymbol{a}_0 \to \boldsymbol{a}_t \) is a Markov Chain in the forward SDE process, the distribution \( \nu_{t | 0}(\boldsymbol{a}_t | \boldsymbol{a}_0, \boldsymbol{s}) \) of \( \boldsymbol{a}_t \) simplifies to \( \nu_{t | 0}(\boldsymbol{a}_t | \boldsymbol{a}_0, \boldsymbol{s}) = \nu_{t | 0}(\boldsymbol{a}_t | \boldsymbol{a}_0) \), which is the distribution of \( \boldsymbol{a}_t \) by the forward SDE \( d\boldsymbol{a}_t = f(\boldsymbol{a}_t, t) \mathrm{d}t + g(t) \mathrm{d}\mathbf{w} \) with initial state \( \boldsymbol{a}_0 \). 
Then, we have
\begin{align}\label{eq:PT3-2}
\mathcal{J}_{\text{HDSM}}(\theta) - \mathcal{J}_{\text{CSM}}(\theta) = &\mathbb{E}{w_t} \mathbb{E}_{\boldsymbol{s} \sim \mu} \mathbb{E}_{\boldsymbol{a}_0 \sim \hat{\gamma}(\boldsymbol{a}_0 | \boldsymbol{s})} \mathbb{E}_{\boldsymbol{a}_t \sim \nu_{t | 0}(\boldsymbol{a}_t | \boldsymbol{a}_0, \boldsymbol{s})} \| s_\theta(\boldsymbol{a}_t; \boldsymbol{s}, t) - \nabla_{\boldsymbol{a}_t} \log \nu_{t | 0}(\boldsymbol{a}_t | \boldsymbol{a}_0, \boldsymbol{s}) \|^2_2 \notag \\
&-  \mathbb{E}_t w_t \mathbb{E}_{\boldsymbol{s} \sim \mu} \mathbb{E}_{\boldsymbol{a}_t \sim \nu_t(\boldsymbol{a}_t|\boldsymbol{s})} \| s_\theta(\boldsymbol{a}_t; \boldsymbol{s}, t) - \nabla_{\boldsymbol{a}_t} \log \nu_t(\boldsymbol{a}_t|\boldsymbol{s}) \|^2_2.
\end{align}

According to \cite{6795935}, given any \( \boldsymbol{s} \) and \( t \), we have  
\begin{align}
&\mathbb{E}_{\boldsymbol{a}_0 \sim \hat{\gamma}(\boldsymbol{a}_0 | \boldsymbol{s})} \mathbb{E}_{\boldsymbol{a}_t \sim \nu_{t | 0}(\boldsymbol{a}_t | \boldsymbol{a}_0, \boldsymbol{s})} \| s_\theta(\boldsymbol{a}_t; \boldsymbol{s}, t) - \nabla_{\boldsymbol{a}_t} \log \nu_{t | 0}(\boldsymbol{a}_t | \boldsymbol{a}_0, \boldsymbol{s}) \|^2_2  \notag \\
= &\mathbb{E}_{\boldsymbol{a}_t \sim \nu_{t}(\boldsymbol{a}_t | \boldsymbol{s})} \| s_\theta(\boldsymbol{a}_t; \boldsymbol{s}, t) - \nabla_{\boldsymbol{a}_t} \log \nu_{t | 0}(\boldsymbol{a}_t | \boldsymbol{s}) \|^2_2 + C_{\boldsymbol{s}, t}, 
\end{align}
where \( C_{\boldsymbol{s}, t} \) is a constant to \( \theta \) depending on \( \boldsymbol{s} \) and \( t \). Substituting this result into the previous Eq.~\ref{eq:PT3-2}, we get
\begin{equation}
\mathcal{J}_{\text{HDSM}}(\theta) - \mathcal{J}_{\text{CSM}}(\theta) = \mathbb{E}_{\boldsymbol{s} \sim \mu}\mathbb{E}_tw_tC_{\boldsymbol{s}, t}.
\end{equation}
Since the right-hand side is a constant to \( \theta \), we have conclude that
\begin{equation}
\nabla_{\theta} \mathcal{J}_{\text{HDSM}}(\theta) = \nabla_{\theta} \mathcal{J}_{\text{CSM}}(\theta).
\end{equation}
\end{proof}

\section{Details for Experiments}
All experiments are conducted on an NVIDIA Tesla A100 80GB GPU, and all fine-tuning methods use the same pre-trained policy.
\subsection{Details and Hyper-parameters for OTPR}

\textbf{Details for training $u_{\omega}, v_{\omega}$.}
The architecture of both $u_{\omega}$ and $v_{\omega}$ is a two MLP. $\lambda$ is set to $1e-5$. The batch size is set $64$. We employ the Adam algorithm to update the
parameters with $1e-6$ learning rate.

\textbf{Details for training $s_\theta$.} We take the VP-SDE~\cite{song2020score} as the forward SDE. In inference, we take the sampling method in DDIM~\cite{ddim} to perform the reverse SDE to generate action. The observations and actions are normalized to $[0, 1]$ using min/max statistics from the pre-training dataset. For diffusion-based policies, we use MLP with two-layer residual connection similar to DPPO. 

\begin{table}[htbp]
  \centering
  \caption{Hyper-parameters for OTPR}
    \begin{tabular}{ccccc}
    \toprule
    \multirow{2}[2]{*}{Parameter} & \multicolumn{4}{c}{Task} \\
          & Franka-Ketichen & CALVIN & Robomimic-Can & Robomimic-Square \\
    \midrule
    Buffer size  & 1000000 & 250000 & 250000 & 250000 \\
    Actor Learning Rate & 1.00E-05 & 1.00E-05 & 1.00E-05 & 1.00E-05 \\
    Discount $\kappa$ & 0.99 & 0.99 & 0.999 & 0.999 \\
    Optimizer & \multicolumn{4}{c}{Adam} \\
    $L$ & 8 & 8 & 8 & 8 \\
    $T$ & 20 & 20 & 20 & 20 \\
    $\tau$ & 0.7 & 0.7 & 0.7 & 0.7 \\
    Actor Batch Size & 1024  & 1024  & 1024  & 1024 \\
    Critic (Q and V) Hidden Layer Sizes  & [512, 512, 512] & [512, 512, 512] & [256, 256, 256] & [256, 256, 256] \\
    Critic (Q and V) Batch Size & 256   & 256   & 256   & 256 \\
    \bottomrule
    \end{tabular}%
  \label{tab:addlabel}%
\end{table}%

\subsection{Details and Hyper-parameters for Baselines}

\textbf{DPPO} For the state-based tasks Robomimic and FrankaKitchen, we trained DPPO-MLP following the original paper's specifications, using an action chunking size of 4 for Robomimic and 8 for FrankaKitchen. For the pixel-based task CALVIN, we trained DPPO-ViT-MLP with an action chunking size of 4.

\textbf{IDQL} We employ the IDQL-Imp version of IDQL, wherein the Q-function, value function, and diffusion policy are refined through new experiences. For Robomimic tasks, we employ the same network architecture as OTPR, while the original IDQL architectures are preserved for Franka-Kitchen and CALVIN. For the IQL $\tau$ expectile, we set it to 0.7 for each task. 

\textbf{DQL} We set the weighting coefficient to $0.5$ for Robomimic, $0.005$ for Franka-Kitchen and $0.01$ for CALVIN.

\textbf{IBRL} We adhere to the original implementations' hyperparameters, with wider (1024) MLP layers and dropout during pre-training.

\textbf{Cal-QL} We set the mixing ratio to $0.25$ for Franka-Kitchen and $0.5$ for CALVIN and Robomimic.



\end{document}